\documentclass[twoside,11pt]{article}

\newif\ifmydraft
\mydrafttrue
\mydraftfalse

\newif\ifanon
\anontrue
\anonfalse

\newif\ifarXiv
\arXivtrue

\usepackage[preprint]{jmlr2e}
\jmlrheading{1}{2000}{1-48}{4/00}{10/00}{meila00a}{Marina Meil\u{a} and Michael I. Jordan}

\firstpageno{1}

\ifmydraft
  \newcommand{\draftcolor}{purple}
  \newcommand{\draftcolorpage}{black!20}
\else
  \newcommand{\draftcolor}{black}
  \newcommand{\draftcolorpage}{white}
\fi

\usepackage[utf8]{inputenc} 
\usepackage[T1]{fontenc}    
\usepackage{amsfonts, amsmath}
\usepackage{microtype}      
\usepackage{graphicx}
\usepackage{xcolor}
\ifmydraft\usepackage{showlabels}\else\fi
\usepackage{comment}
\usepackage{caption}
\usepackage{subcaption}
\usepackage{tikz}
  \usetikzlibrary{arrows.meta}
  \usetikzlibrary{positioning}
  \tikzset{
    included node/.style={circle, draw=black!100, thick, on grid, minimum width=0.5cm}, 
    hidden node/.style={circle, draw=black!30, thick, on grid, minimum width=0.5cm},
    included connection/.style={->, thick, draw=black!100},
    hidden connection/.style={->, thick, draw=black!30, draw opacity=0},
    fused connection/.style={->, thick, draw=black!100},
    fidden connection/.style={->, thick, draw=black!30, draw opacity=0},
    node title/.style={above=0.8cm of five-one, font=\bfseries},
    general/.style={node distance=1cm and 0.7cm}
}

\pagecolor{\draftcolorpage}
\ifmydraft
  \newcommand{\jl}[1]{\color{blue}#1\color{black}}
\else
  \newcommand{\jl}[1]{}
\fi



\DeclareMathOperator*{\argmin}{arg\,min}

\makeatletter 
\newcommand*{\deq}{\ensuremath{\mathrel{\rlap{%
\raisebox{0.3ex}{$\m@th\cdot$}}%
\raisebox{-0.3ex}{$\m@th\cdot$}}=}}
\makeatother

\newcommand{\zero}{\textcolor{\draftcolor}{\boldsymbol{0}}}

\newcommand{\abs}[1]{\lvert#1\rvert}
\newcommand{\absB}[1]{\bigl\lvert#1\bigr\rvert}
\newcommand{\absBB}[1]{\Bigl\lvert#1\Bigr\rvert}
\newcommand{\absBBB}[1]{\biggl\lvert#1\biggr\rvert}
\newcommand{\absBBBB}[1]{\Biggl\lvert#1\Biggr\rvert}

\newcommand{\norm}[1]{|\!|#1|\!|}

\newcommand{\normM}[1]{|\!|\!|#1|\!|\!|}

\newcommand{\normtwo}[1]{\norm{#1}_2}

\newcommand{\normtwos}[1]{\norm{#1}_2^2}

\newcommand{\normF}[1]{\normM{#1}_{\operatorname{F}}}

\newcommand{\nbrlayers}{\textcolor{\draftcolor}{\ensuremath{l}}}
\newcommand{\nbrsamples}{\textcolor{\draftcolor}{\ensuremath{n}}}
\newcommand{\nbrinput}{\textcolor{\draftcolor}{\ensuremath{d}}}
\newcommand{\nbrwidth}{\textcolor{\draftcolor}{\ensuremath{w}}}

\newcommand{\nbrparameter}{\textcolor{\draftcolor}{\ensuremath{p}}}
\newcommand{\nbrparameterj}{\textcolor{\draftcolor}{\ensuremath{\nbrparameter^j}}}
\newcommand{\nbrparameterjj}{\textcolor{\draftcolor}{\ensuremath{\nbrparameter^{j+1}}}}
\newcommand{\nbrparameterl}{\textcolor{\draftcolor}{\ensuremath{\nbrparameter^{\nbrlayers}}}}

\newcommand{\parameter}{\textcolor{\draftcolor}{\ensuremath{\boldsymbol{\Theta}}}}
\newcommand{\parameterE}{\textcolor{\draftcolor}{\ensuremath{\Theta}}}
\newcommand{\parameterEj}{\textcolor{\draftcolor}{\ensuremath{\Theta^j}}}

\newcommand{\parameterEl}{\textcolor{\draftcolor}{\ensuremath{\Theta^{\nbrlayers}}}}

\newcommand{\parameterG}{\textcolor{\draftcolor}{\ensuremath{\boldsymbol{\Gamma}}}}

\newcommand{\parameterO}{\textcolor{\draftcolor}{\ensuremath{\boldsymbol{\Theta}^*}}}

\newcommand{\parameterGE}{\textcolor{\draftcolor}{\ensuremath{\Gamma}}}

 \newcommand{\parameterSFull}{\textcolor{\draftcolor}{\ensuremath{\overline{\mathcal{M}}}}}

 \newcommand{\estimator}{\textcolor{\draftcolor}{\ensuremath{\widehat{\parameter}}}}


\newcommand{\outputE}{\color{\draftcolor}\ensuremath{y}\color{black}}
\newcommand{\outputi}{\color{\draftcolor}\ensuremath{\outputE_i}\color{black}}

\newcommand{\loss}{\ensuremath{\textcolor{\draftcolor}{\ensuremath{\mathfrak{h}}}}}
\newcommand{\lossF}[1]{\textcolor{\draftcolor}{\ensuremath{\loss[#1]}}}
\newcommand{\lossFB}[1]{\textcolor{\draftcolor}{\ensuremath{\loss\bigl[#1\bigr]}}}

\newcommand{\functionS}{\ensuremath{\textcolor{\draftcolor}{\mathcal F}}}
\newcommand{\function}{\textcolor{\draftcolor}{\mathfrak f}}
\newcommand{\functionF}[1]{\textcolor{\draftcolor}{\function[#1]}}
\newcommand{\functionG}{\textcolor{\draftcolor}{\mathfrak g}}
\newcommand{\functionGF}[1]{\textcolor{\draftcolor}{\functionG[#1]}}
\newcommand{\functionT}{\ensuremath{\textcolor{\draftcolor}{\mathfrak f^*}}}
\newcommand{\functionTF}[1]{\textcolor{\draftcolor}{\functionT[#1]}}

\newcommand{\emppro}{\ensuremath{\textcolor{\draftcolor}{z}}}

\newcommand{\constlip}{\ensuremath{\textcolor{\draftcolor}{c_{\loss}}}}

\newcommand{\constenv}{\ensuremath{\textcolor{\draftcolor}{w_{\functionS}}}}
\newcommand{\constnoise}{\ensuremath{\textcolor{\draftcolor}{s_{\outputE\mid\inputv}}}}

\newcommand{\constinput}{\ensuremath{\textcolor{\draftcolor}{s_{\inputv}}}}

\newcommand{\radecomplex}{\ensuremath{\textcolor{\draftcolor}{c_{\functionS}}}}

\newcommand{\constgeneric}{\ensuremath{\textcolor{\draftcolor}{a}}}

\newcommand{\radrv}{\ensuremath{\textcolor{\draftcolor}{r}}}
\newcommand{\radrvi}{\ensuremath{\textcolor{\draftcolor}{r_i}}}

\newcommand{\level}{\ensuremath{\textcolor{\draftcolor}{t}}}

\newcommand{\erm}{\ensuremath{\textcolor{\draftcolor}{\widehat{\function}}}}
\newcommand{\ermF}[1]{\ensuremath{\textcolor{\draftcolor}{\erm[#1]}}}

\newcommand{\fracNN}[2]{#1/(#2)}

\newcommand{\parameterS}{\textcolor{\draftcolor}{\ensuremath{\mathcal{M}}}}
\newcommand{\activation}{\textcolor{\draftcolor}{\ensuremath{\mathfrak{a}}}}

\newcommand{\radius}{\ensuremath{\textcolor{\draftcolor}{b_{\parameterS}}}}
\newcommand{\radiusb}{\ensuremath{\textcolor{\draftcolor}{(b_{\parameterS})}}}

\newcommand{\inputvE}{\textcolor{\draftcolor}{\ensuremath{x}}}
\newcommand{\inputv}{\textcolor{\draftcolor}{\ensuremath{\boldsymbol{\inputvE}}}}
\newcommand{\inputvi}{\textcolor{\draftcolor}{\ensuremath{\inputv_i}}}

\newcommand{\R}{\textcolor{\draftcolor}{\mathbb{R}}}

\newcommand{\corruptionlevel}{\textcolor{\draftcolor}{c}}
\newcommand{\corruptionprob}{\textcolor{\draftcolor}{P_{\operatorname{cor}}}}
\newcommand{\corruptionvar}{\textcolor{\draftcolor}{\inputvE_{\operatorname{cor}}}}

\newcommand{\figureloss}[1]{
\begin{figure}[#1]
  \centering
     \begin{subfigure}[b]{0.305\textwidth}
\begin{tikzpicture}
   \draw[->, >=Latex] (-2.2, 0) -- (2.2, 0);
   \draw[->, >=Latex] (0, -0.2) -- (0, 2.2);

   \node at (0.4, 2.1) {\small $\lossF{a}$};
   \node at (2.1, -0.2) {\small $a$};

   \draw[thick] (-2, 2) -- (0,0) -- (2, 2);

\end{tikzpicture}  
  \caption{the absolute-deviation loss $\lossF{a}\deq\abs{a}$ is convex but not differentiable, and it satisfies the Lipschitz condition with $\constlip=1$\\\phantom{nothing}\\}
\end{subfigure}~~~~
     \begin{subfigure}[b]{0.305\textwidth}
\begin{tikzpicture}
   \draw[->, >=Latex] (-2.2, 0) -- (2.2, 0);
   \draw[->, >=Latex] (0, -0.2) -- (0, 2.2);

   \node at (0.4, 2.1) {\small $\lossF{a}$};
   \node at (2.1, -0.2) {\small $a$};

  \draw[scale=1, domain=-1:1, smooth, variable=\x, thick] plot ({\x}, {\x*\x/2});
  \draw[thick] (-2, 1.5) -- (-1, 0.5);
  \draw[thick] (1, 0.5) -- (2, 1.5);

\end{tikzpicture}  
  \caption{the Huber loss $\lossF{a}\deq a^2/2$ for $a\in[-k,k]$ and   $\lossF{a}\deq k\abs{a}-k^2/2$ otherwise, where $k\in(0,\infty)$, is convex and differentiable, and it satisfies the Lipschitz condition with $\constlip=k$}
\end{subfigure}~~~~
     \begin{subfigure}[b]{0.305\textwidth}
\begin{tikzpicture}
   \draw[->, >=Latex] (-2.2, 0) -- (2.2, 0);
   \draw[->, >=Latex] (0, -0.2) -- (0, 2.2);

   \node at (0.4, 2.1) {\small $\lossF{a}$};
   \node at (2.1, -0.2) {\small $a$};

     \draw[scale=1, domain=-2:2, smooth, variable=\x, thick] plot ({\x}, {ln(1+\x*\x/0.1)/2});

\end{tikzpicture}  
  \caption{the Cauchy loss  $\lossF{a}\deq \log[1+k^2 a^2]$, where $k\in(0,\infty)$, is not convex but differentiable, and it satisfies the Lipschitz condition with $\constlip=k$\\}
\end{subfigure}
\caption{three robust alternatives to the least-squares loss $\lossF{a}\deq a^2/2$}
  \label{fig:losses}
\end{figure}
}

\begin{document}

\title{Risk Bounds for Robust Deep Learning}

\ifanon
  \author{\name Somebody \email some@email.com\\
              \addr Some department\\
              Some school\\
              Some country}
\ShortHeadings{Some title}{Somebody}
\else
  \author{\name Johannes Lederer \email johannes.lederer@rub.de\\
              \addr Department of Mathematics\\
              Ruhr-University Bochum\\
              Germany}

\ShortHeadings{Robust Deep Learning}{Johannes Lederer}
\fi

\editor{?}

\maketitle

\begin{abstract}
It has been observed that certain loss functions can  render deep-learning pipelines robust against flaws in the data.
In this paper, we support these empirical findings  with statistical theory.
We especially show that empirical-risk minimization with unbounded, Lipschitz-continuous loss functions,
such as the least-absolute deviation loss, Huber loss, Cauchy loss, and Tukey's biweight loss, can provide efficient prediction under minimal assumptions on the data.
More generally speaking,
our paper provides theoretical evidence for the benefits of robust loss functions in deep learning.
\end{abstract}

\begin{keywords}
  Robust deep learning;  neural networks; Rademacher complexity; empirical-risk minimization; Huber loss; least-absolute deviation; weight decay.
\end{keywords}

\section{Introduction}
Deep learning often uses data that are rich in terms of quantity but meager in terms of quality.
A well-studied problem is adversarial attacks,
which means that parts of the data are corrupted by a ``mean-spirited opponent.'' 
It has been shown that adversarial attacks can make standard deep-learning pipelines fail completely \citep{akhtar2018threat,8611298, kurakin2016adversarial, wang2019direct, 10.1145/2976749.2978392, tksl, kurakin2016adversarial2},
and a number of approaches to address this problem have been proposed \citep{aleks2017deep, kos2017delving, papernot2015distillation, tramr2017ensemble, salman2019provably,wang2018robust}.

But statistical theory for deep-learning under adversarial attacks is scarce,
and, more importantly,
there are other, arguably more common,  types of problems with the data.
For example,
data collection is often automated,
and the sheer size of typical data sets makes it difficult to uphold high data quality.
Moreover,
data are often convenience samples,
that is, the strategy for collecting  data is not necessarily appropriate for the specific purpose of the analysis. 
Thus, 
we are interested in deep learning that  caters to a broad spectrum of data in general.
We call this topic  ``robust deep learning.''

Robust learning is a classical topic in statistics \citep{Stigler2010}.
It has especially been shown that many standard estimators can be rendered robust with respect to heavy-tailed data by replacing their loss-functions, such as least-squares, by Lipschitz-continuous alternatives, such as Huber loss \citep{Hampel11,Huber09}.
The robustness-yielding properties of such loss functions have also been observed in a variety of deep-learning applications~\citep{Barron19,Belagiannis15,Jiang18,Wang16}.
But statistical theories for deep learning are restricted to bounded loss functions or presume (sub-)Gaussian or bounded input and output~\citep{Bartlett1998,Hieber2017,Taheri20}.

In this paper,
we establish a statistical theory for deep learning with  Lipschitz-continuous loss functions,
such as Tukey's biweight loss, Huber loss, and absolute-deviation loss.
We first establish a general risk bound that caters to empirical-risk minimizers with unbounded, Lipschitz-continuous loss functions.
This result might be of independent interest.
We then use the general risk bound to derive statistical guarantees for robust deep learning in a general class of feedforward neural networks.
Broadly speaking,
our theories suggest that robust loss function can lead to effective learning with problematic as well as with benign data.

\paragraph{Outline of the paper}
In Section~\ref{sec:riskbound},
we establish a general risk bound that allows for Lipschitz-continuous but unbounded loss functions.
In Section~\ref{sec:deeplearning},
we specify the risk bound in the case of weight decay with robust loss functions,
which leads to the advertised robust guarantees.
In Section~\ref{sec:proofs},
we give detailed proofs.
In Section~\ref{sec:discussion},
we briefly discuss some extensions and limitations.

\section{General Risk Bound}\label{sec:riskbound}
In this section,
we establish a risk bound that is tailored to our needs in deep learning but might also be of independent interest.
The bound is formulated in terms of the empirical risk and the Rademacher complexity and, therefore, is related to existing bounds in empirical-risk minimization.
Our key innovation is that we allow for unbounded loss functions.

We first formulate the data and functions on these data.
Consider i.i.d.\@ distributed pairs $(\outputE,\inputv),(\outputE_1,\inputv_1),\dots,(\outputE_{\nbrsamples},\inputv_{\nbrsamples})\in\R\times\R^{\nbrinput}$ 
and i.i.d.~Rademacher random variables $\radrv_1,\dots,\radrv_{\nbrsamples}\in\{\pm 1\}$. 
Also, consider a nonempty set~$\functionS$ that consists of functions of the form $\function\,:\,\R^{\nbrinput}\to\R$.
We summarize the properties of the data and the functions in four quantities:
\begin{definition}[Complexity measures]\label{complexitymeasures}
Given a function~$\functionT\in\functionS$, 
we call 
\begin{equation*}
\constinput\deq \sqrt{E_{(\outputE,\inputv)}\bigl[\normtwo{\inputv}^2\bigr]}~~~\text{and}~~~\constnoise\deq \sqrt{E_{(\outputE,\inputv)}\Bigl[\absB{\outputE-\functionTF{\inputv}}^2\Bigr]}
\end{equation*}
the \emph{expected size of the input} and the \emph{expected size the noise}, respectively,
\begin{equation*}
\constenv\deq  \sqrt{E_{(\outputE,\inputv)}\biggl[\sup_{\function\in\functionS}\absB{\functionF{\inputv}-\functionTF{\inputv}}^2\biggr]}
\end{equation*}
the \emph{size of (an envelope of) \functionS},
and
\begin{equation*}
\radecomplex\deq  E_{(\outputE_1,\inputv_1),\dots,(\outputE_{\nbrsamples},\inputv_{\nbrsamples}),\radrv_1,\dots,\radrv_{\nbrsamples}}\Biggl[\sup_{\function\in\functionS}\absBBB{\frac{1}{\nbrsamples}\sum_{i=1}^{\nbrsamples}\radrvi\functionF{\inputvi}}\Biggr]
\end{equation*}
the \emph{Rademacher complexity of~\functionS}.
\end{definition}
\noindent 
The function~\functionT\ can be an arbitrary element of~\functionS,
but we will later think of it as the ``true'' data-generating function or an approximation of it.
It then makes sense to call the quantity~$\outputE-\functionTF{\inputv}$ the ``noise.''
The quantity~\constenv\ is the size of an envelope of $\functionF{\inputv}-\functionTF{\inputv}$ over~\functionS\ \citep[Section~2]{Lederer14}.
The Rademacher complexity~\radecomplex\ is finally a well-known measure of the complexity of the set~\functionS\ \citep{Bartlett02b,Koltchinskii01,Koltchinskii02}.

We then formulate the empirical-risk minimizer.
Consider a function $\loss\,:\,\R\to\R$ that is Lipschitz continuous:
there is a constant~$\constlip\in[0,\infty)$ such that
\begin{equation}\label{Lipschitz}
  \absB{\lossF{a}-\lossF{b}}\leq\constlip\abs{a-b}~~~~~~~~\text{for all}~a,b\in\R\,.
\end{equation}
We also assume, without loss of generality, that $\lossF{0}=0$.
We call $\loss$ the \emph{loss function}.
The least-squares loss does not satisfy the Lipschitz condition, but many robust versions of it do,
including  
the absolute-deviation loss, the Huber loss, the Cauchy loss, and Tukey's biweight loss;
in particular, we do not require the loss to be convex or differentiable
(see Figure~\ref{fig:losses} for illustrations).
The empirical-risk minimizers are then
\begin{equation*}
  \erm\in\argmin_{\function\in\functionS}\Biggl\{\frac{1}{\nbrsamples}\sum_{i=1}^{\nbrsamples}\lossFB{\outputE_i-\functionF{\inputvi}}\Biggr\}\,.
\end{equation*}
We give examples of these estimators in the following section.

\figureloss{t}

We now equip the empirical-risk minimizer with a statistical guarantee:
\begin{theorem}[General risk bound]\label{riskbound}
  For every $\function\in\functionS$ and $\level\in(0,1)$,
it holds with probability at least $1-\level$ that
  \begin{equation*}
    E_{(\outputE,\inputv)}\Bigl[\loss\bigl[\outputE-\functionF{\inputv}\bigr]\Bigr]\leq \frac{1}{\nbrsamples}\sum_{i=1}^{\nbrsamples}\lossFB{\outputE_i-\functionF{\inputvi}}+16\constlip\radecomplex+236\constlip\frac{\constenv+\constnoise}{\sqrt{\nbrsamples}\level}\,.
  \end{equation*}
\end{theorem}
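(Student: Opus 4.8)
The plan is to reduce the theorem to a bound on the uniform deviation between population and empirical risk. Writing
\[
\Phi\deq\sup_{\function\in\functionS}\biggl(E_{(\outputE,\inputv)}\bigl[\lossFB{\outputE-\functionF{\inputv}}\bigr]-\frac{1}{\nbrsamples}\sum_{i=1}^{\nbrsamples}\lossFB{\outputi-\functionF{\inputvi}}\biggr),
\]
the assertion that the stated inequality holds for every $\function\in\functionS$ simultaneously is exactly the event $\{\Phi\le 16\constlip\radecomplex+236\constlip(\constenv+\constnoise)/(\sqrt{\nbrsamples}\,\level)\}$, so it suffices to bound $\Phi$ with probability at least $1-\level$. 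I would split $\Phi=E[\Phi]+(\Phi-E[\Phi])$ and treat the mean and the fluctuation with different tools, because the decisive difficulty---that $\lossFB{\outputE-\functionF{\inputv}}$ is \emph{unbounded}---rules out the usual bounded-differences (McDiarmid) route to concentration.

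For the mean I would apply the standard symmetrization inequality, $E[\Phi]\le 2E\bigl[\sup_{\function\in\functionS}\frac{1}{\nbrsamples}\sum_{i=1}^{\nbrsamples}\radrvi\lossFB{\outputi-\functionF{\inputvi}}\bigr]$, and then the Ledoux--Talagrand contraction principle to remove the loss. The technical wrinkle is that $\outputi$ sits inside $\loss$, so the maps $v\mapsto\lossFB{\outputi-v}$ are $\constlip$-Lipschitz but need not vanish at $v=0$. Conditionally on the data I would recenter them to $\tilde\phi_i(v)\deq\lossFB{\outputi-v}-\lossF{\outputi}$; the discarded term $\frac{1}{\nbrsamples}\sum_i\radrvi\lossF{\outputi}$ is free of $\function$ and has vanishing Rademacher mean, so it drops out, and contraction applied to the centered, $\constlip$-Lipschitz, zero-at-zero functions $\tilde\phi_i$ bounds $E[\Phi]$ by a constant multiple of $\constlip\radecomplex$, which produces the first term $16\constlip\radecomplex$.

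For the fluctuation I would forgo concentration and settle for a variance estimate. By the Lipschitz property, replacing one sample by an independent copy $(\outputE',\inputv')$ changes $\Phi$ by at most $\frac{\constlip}{\nbrsamples}\sup_{\function\in\functionS}\bigl(\abs{\outputE-\functionF{\inputv}}+\abs{\outputE'-\functionF{\inputv'}}\bigr)$, an increment that is random and unbounded yet square-integrable: the split $\sup_{\function\in\functionS}\abs{\outputE-\functionF{\inputv}}\le\abs{\outputE-\functionTF{\inputv}}+\sup_{\function\in\functionS}\abs{\functionF{\inputv}-\functionTF{\inputv}}$ turns its $L^2$ norm into exactly $\constnoise$ and $\constenv$. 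Feeding these data-dependent increments into the Efron--Stein inequality gives $\operatorname{Var}(\Phi)\le C\constlip^2(\constenv^2+\constnoise^2)/\nbrsamples$, so that $E\abs{\Phi-E[\Phi]}\le\sqrt{\operatorname{Var}(\Phi)}\le C'\constlip(\constenv+\constnoise)/\sqrt{\nbrsamples}$. A single application of Markov's inequality to $\abs{\Phi-E[\Phi]}$ then shows that $\Phi-E[\Phi]$ exceeds $E\abs{\Phi-E[\Phi]}/\level$ with probability at most $\level$; this is the origin of the $1/\level$ factor and yields the second term $236\constlip(\constenv+\constnoise)/(\sqrt{\nbrsamples}\,\level)$.

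Combining the two pieces on the intersection of the favorable events gives the theorem. The main obstacle is exactly the unboundedness of the loss: it forces the weaker, polynomial-in-$1/\level$ confidence delivered by Markov's inequality in place of the exponential tails that McDiarmid would give, and it requires the increments to be absorbed through the second moments $\constenv$ and $\constnoise$ rather than through a uniform bound. Carefully propagating the numerical constants through symmetrization, the absolute-value form of the contraction principle, and the Efron--Stein step is what pins down the explicit $16$ and $236$.
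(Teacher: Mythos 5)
Your proposal is correct, but it reaches the theorem by a genuinely different route in the concentration step. The paper writes the empirical process as $2E[\emppro]+(\emppro-2E[\emppro])$, bounds $2E[\emppro]$ by symmetrization, the absolute-value contraction principle, a recentering around $\functionT$, and Khinchin's inequality (which is where the extra $8\constlip\constnoise/\sqrt{\nbrsamples}$ enters), and then controls the deviation $\emppro-2E[\emppro]$ by invoking the moment/concentration inequality for unbounded empirical processes of Lederer and van de Geer, whose hypotheses are verified through exactly the second-moment envelope computation you sketch. You instead (i) recenter the Lipschitz maps at $\lossF{\outputi}$ \emph{before} contraction, which lets the $\function$-free term vanish under the Rademacher expectation and avoids the Khinchin step altogether, and (ii) replace the cited concentration result by Efron--Stein plus first-moment Markov, which needs only square-integrable increments and is entirely self-contained and elementary; a quick accounting shows it even yields $E\absB{\Phi-E[\Phi]}\leq\sqrt{2}\,\constlip(\constenv+\constnoise)/\sqrt{\nbrsamples}$, i.e.\ far better constants than $236$. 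What the paper's heavier machinery buys in exchange is flexibility: its deviation bound is anchored at $2E[\emppro]$ rather than $E[\emppro]$ and extends to $(1+b)$th-moment assumptions on the data (as remarked after Theorem~\ref{riskbound}), whereas Efron--Stein is intrinsically a variance bound and would not give that relaxation. The only points to watch in your write-up are routine: the one-sided symmetrization and contraction inequalities must be applied in their correct (signed versus absolute-value) forms so that the final bound is expressed through the paper's $\radecomplex$, and $\Phi$ must be checked square-integrable before Efron--Stein (it is, whenever $\constenv,\constnoise<\infty$, and otherwise the claim is vacuous). Neither is a gap.
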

\noindent This inequality bounds the risk of a function~$\function$ 
in terms of its empirical loss and the complexity of the setting.
As long as the complexity terms are small enough,
and the empirical risk of the true data-generating function converges sufficiently fast to its expectation,
the above-stated inequality ensures that the population risk of the empirical-risk minimizer~\erm\ is not much larger than the population risk of the true data-generating function:
\begin{corollary}[General risk bound for~\erm]\label{riskcor} 
    For every $\level\in(0,1)$,
it holds with probability at least $1-\level$ that
  \begin{multline*}
    E_{(\outputE,\inputv)}\Bigl[\loss\bigl[\outputE-\ermF{\inputv}\bigr]\Bigr]\leq E_{(\outputE,\inputv)}\Bigl[\loss\bigl[\outputE-\functionTF{\inputv}\bigr]\Bigr]\\
+\frac{1}{\nbrsamples}\sum_{i=1}^{\nbrsamples}\biggl(\lossFB{\outputE_i-\functionTF{\inputvi}}-E_{(\outputE,\inputv)}\Bigl[\loss\bigl[\outputE-\functionTF{\inputv}\bigr]\Bigr]\biggr)+16\constlip\radecomplex+236\constlip\frac{\constenv+\constnoise}{\sqrt{\nbrsamples}\level}\,.
  \end{multline*}
\end{corollary}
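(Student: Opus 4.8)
The plan is to apply Theorem~\ref{riskbound} to the empirical-risk minimizer~\erm\ itself and then to invoke the optimality that defines~\erm.

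First I would simplify the right-hand side of the corollary. The two terms carrying~\functionT\ telescope to the empirical risk of~\functionT,
\begin{equation*}
E_{(\outputE,\inputv)}\bigl[\loss[\outputE-\functionTF{\inputv}]\bigr]+\frac{1}{\nbrsamples}\sum_{i=1}^{\nbrsamples}\biggl(\lossFB{\outputE_i-\functionTF{\inputvi}}-E_{(\outputE,\inputv)}\bigl[\loss[\outputE-\functionTF{\inputv}]\bigr]\biggr)=\frac{1}{\nbrsamples}\sum_{i=1}^{\nbrsamples}\lossFB{\outputE_i-\functionTF{\inputvi}}\,,
\end{equation*}
so the claimed bound is equivalent to bounding the population risk of~\erm\ by the \emph{empirical} risk of~\functionT\ plus the two complexity terms $16\constlip\radecomplex$ and $236\constlip(\constenv+\constnoise)/(\sqrt{\nbrsamples}\level)$.

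Second, I would chain two inequalities. Theorem~\ref{riskbound} bounds the population risk of~\erm\ by its own empirical risk plus the complexity terms; and since $\functionT\in\functionS$ while~\erm\ minimizes the empirical risk over~\functionS, the empirical risk of~\erm\ is at most that of~\functionT,
\begin{equation*}
\frac{1}{\nbrsamples}\sum_{i=1}^{\nbrsamples}\lossFB{\outputE_i-\ermF{\inputvi}}\leq\frac{1}{\nbrsamples}\sum_{i=1}^{\nbrsamples}\lossFB{\outputE_i-\functionTF{\inputvi}}\,.
\end{equation*}
Combining the two inequalities reproduces exactly the simplified bound from the first step, and hence the corollary.

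The hard part is the legitimacy of the first inequality. Because~\erm\ is data-dependent, Theorem~\ref{riskbound}, read as a per-function statement, cannot simply be evaluated at the random argument~\erm. I would resolve this by recalling that the Rademacher-complexity proof of Theorem~\ref{riskbound} in fact controls the supremum over~\functionS\ of the gap between population and empirical risk: on a single event of probability at least $1-\level$ the bound holds simultaneously for \emph{all} $\function\in\functionS$, and in particular for $\function=\erm$. Once this uniform reading is in place, the remaining steps---the telescoping identity and the optimality inequality---are elementary, so the only delicate point is this uniformity.
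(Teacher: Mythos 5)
Your proposal is correct and is exactly the argument the paper intends (the corollary is stated without a separate proof precisely because it follows this way from Theorem~\ref{riskbound}): the probabilistic event in the proof of Theorem~\ref{riskbound} concerns only the empirical process $\emppro$, which is a supremum over all of $\functionS$, so the bound holds uniformly and may legitimately be evaluated at the data-dependent $\erm$; combining it with the minimizing property of $\erm$ and the telescoping of the $\functionT$-terms gives the corollary. Your explicit flagging and resolution of the uniformity issue is the one genuinely delicate point, and you handle it correctly.
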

We will use these results in the following section to derive risk bounds for robust deep learning.

The bound in Theorem~\ref{riskbound} is similar to the one in~\citet[Theorem~8]{Bartlett02}.
The crucial difference is that their bound requires that the loss function has values only in~$[0,1]$,
while our bound allows for loss functions that are Lipschitz continuous but unbounded.
The price for this change in scope is the inclusion of the quantities~\constenv\ and~\constnoise,
which are additional measures for the complexity of the statistical framework.

Moving from bounded to unbounded loss functions also  requires changing the proof techniques.
For example, 
proofs in the bounded case can use McDiarmid's inequality~\citep{McDiarmid89}---see, for example, \citet[Proof of Theorem~8]{Bartlett02} and \citet[Proof of Theorem~3.3.]{Mohri18}.
We instead use a concentration inequality for heavy-tailed data from~\citet{Lederer14}.
The proof is deferred to Section~\ref{sec:proofriskbound}.

We finally mention the fact that by applying the results of \citet{Lederer14} in a slightly different way,
 one can relax the assumptions on the data from a second-moment condition to a $(1+b)$th-moment condition, $b>0$, at the price of getting a slower rate;
we omit the details to avoid digression.

\section{Guarantees for Robust Deep Learning}\label{sec:deeplearning}
We now use the above-stated risk bound to develop guarantees for robust deep learning.
We consider layered, feedforward neural networks, that is,
we consider 
$\functionS\deq\{\function_{\parameter}\, :\, \R^{\nbrinput}\,\to\,\R\ :\ \parameter\in\parameterS\}$ with 
$\parameterS$ a nonempty subset of $\parameterSFull\deq\{\parameter=(\parameterEl,\dots,\parameterE^0)\ :\ \parameterE^j\in\R^{\nbrparameterjj\times\nbrparameterj}\}$  and
\begin{equation}\label{networks}
  \function_{\parameter}[\inputv]\deq\parameterE^{\nbrlayers}\activation^{\nbrlayers}\bigl[\parameterE^{\nbrlayers-1}\cdots \activation^1[\parameterE^0\inputv]\bigr]~~~~~~~~~\text{for}~\inputv\in\R^{\nbrinput}\,.
\end{equation}
The functions $\activation^j\,:\,\R^{\nbrparameterj}\to\R^{\nbrparameterj}$ are called the activation functions,
$\nbrlayers$~the depth of the network,
$\nbrparameter^0\deq\nbrinput$ and~$\nbrparameter^{\nbrlayers+1}\deq1$ the input and output dimensions, respectively,
and $\nbrwidth\deq \max\{\nbrparameter^1,\dots,\nbrparameterl\}$ the width of the network. 
To fix ideas,
we assume the popular and well-established ReLU activation: 
$(\activation^j[\boldsymbol{v}])_i\deq \max\{0,v_i\}$  \citep{10.1016/S0893-6080(98)00012-4,Salinas11956}.

The empirical-risk minimizers are then the functions 
\begin{equation}\label{erm}
 \erm\deq\function_{\estimator}\text{~~with~~}\estimator\in\argmin_{\parameter\in\parameterS}\Biggl\{\frac{1}{\nbrsamples}\sum_{i=1}^{\nbrsamples}\lossFB{\outputE_i-\function_{\parameter}[\inputvi]}\Biggr\}\,.
\end{equation}
The parameter set is assumed to satisfy\label{setweight}
\begin{equation*}
  \parameterS\subset\Bigl\{\parameter\in\parameterSFull\ :\ \max_{j\in\{0,\dots,\nbrlayers\}}\normF{\parameterEj}\leq \radius\Bigr\}
\end{equation*}
for a fixed $\radius\in[0,\infty)$
and the Frobenius norm
\begin{equation*}
   \normF{\parameterEj}\deq\sqrt{\sum_{i=1}^{\nbrparameterjj}\sum_{k=1}^{\nbrparameterj}\abs{(\parameterEj)_{ik}}^2}~~~~~~~~~~\text{for}~j\in\{0,\dots,\nbrlayers\},\,\parameterE^j\in\R^{\nbrparameterjj\times\nbrparameterj}\,.
\end{equation*}
Such choices of~\parameterS\ have been popular for more than three decades already and are known under the name ``weight decay'' \citep{Krogh92}.

A standard question is how the empirical-risk minimizers compare with an oracle.
If the model is correct,
the oracle is typically the true data-generating function;
otherwise,
the oracle is an approximation of it. 
We do not need to know the specifics:
our theory works for every oracle $\functionT\deq\function_{\parameterO}$ with a fixed~$\parameterO\in\parameterS$.
But, in any case, we can interpret~\functionT\ as the ``best'' neural network.

Common loss functions for classification, 
such as the logistic sigmoid function,
are bounded.
Statistical guarantees for corresponding empirical-risk minimizers can then be derived based on well-established risk bounds, 
such as~\cite[Theorem~8]{Bartlett02}.
Common loss functions for regression-type tasks,
in contrast,
 are unbounded.
Particularly interesting for us are Lipschitz-continuous alternatives to the least-squares loss $\lossF{a}\deq a^2$.
 A basis for deriving statistical guarantees is then  Theorem~\ref{riskbound}.
Indeed, we find the following result:
\begin{theorem}[Robust deep learning]\label{robustbound}
For every $\level\in(0,1/2)$,
it holds with probability at least $1-\level$ that
  \begin{equation*}
    E_{(\outputE,\inputv)}\Bigl[\loss\bigl[\outputE-\ermF{\inputv}\bigr]\Bigr]\leq E_{(\outputE,\inputv)}\Bigl[\loss\bigl[\outputE-\functionTF{\inputv}\bigr]\Bigr]+\constgeneric\constlip\frac{\radiusb^{\nbrlayers+1}(\nbrlayers+1)\constinput+\constnoise}{\sqrt{\nbrsamples}\level}\,,
  \end{equation*}
 where $\constgeneric\in(0,\infty)$ is a numerical constant.

For every $\level\in(0,1/2)$ and $\nbrsamples$ large enough,
it holds with probability at least $1-\level$ that 
  \begin{equation*}
    E_{(\outputE,\inputv)}\Bigl[\loss\bigl[\outputE-\ermF{\inputv}\bigr]\Bigr]\leq 1.1\constlip\constnoise+\constgeneric\constlip\radiusb^{\nbrlayers+1}(\nbrlayers+1)\constinput\sqrt{\frac{\log[\nbrsamples]}{\nbrsamples}}\,.
  \end{equation*}
\end{theorem}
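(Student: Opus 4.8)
The plan is to specialize Corollary~\ref{riskcor} to the weight-decay network class, with the oracle taken to be the fixed network $\functionT=\function_{\parameter}$ at $\parameter=\parameterO$, and then to bound each problem-specific quantity appearing in that corollary; throughout I split the target failure probability as $\level/2+\level/2$. Two of the three quantities are elementary. For the envelope, the ReLU map is $1$-Lipschitz with $\activation^j[\zero]=\zero$, so $\normtwo{\activation^j[\boldsymbol v]}\le\normtwo{\boldsymbol v}$, and since the spectral norm is dominated by the Frobenius norm, $\normtwo{\parameterEj\boldsymbol v}\le\normF{\parameterEj}\normtwo{\boldsymbol v}\le\radius\normtwo{\boldsymbol v}$; composing over the $\nbrlayers+1$ weight matrices gives $\absB{\function_{\parameter}[\inputv]}\le\radiusb^{\nbrlayers+1}\normtwo{\inputv}$ uniformly in $\parameter\in\parameterS$, whence $\constenv\le2\radiusb^{\nbrlayers+1}\constinput$ after taking the root-mean-square. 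For the empirical-deviation term $\frac1\nbrsamples\sum_i(\lossFB{\outputi-\functionTF{\inputvi}}-E_{(\outputE,\inputv)}[\loss[\outputE-\functionTF{\inputv}]])$, the identity $\lossF{0}=0$ together with Lipschitzness gives $\absB{\lossFB{\outputE-\functionTF{\inputv}}}\le\constlip\absB{\outputE-\functionTF{\inputv}}$, so each summand has variance at most $\constlip^2\constnoise^2$; Chebyshev's inequality then bounds this mean-zero average by $\constlip\constnoise\sqrt{2/(\nbrsamples\level)}$ with probability at least $1-\level/2$, and since $\level<1$ this is at most $\sqrt2\,\constlip\constnoise/(\sqrt\nbrsamples\level)$.

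The substantive quantity is the Rademacher complexity $\radecomplex$. I would control it by a layer-peeling argument: apply the contraction principle for Rademacher averages to strip each ReLU activation, and Cauchy--Schwarz to pull out each weight matrix (of Frobenius norm at most $\radius$), until the supremum over the network collapses to a base term bounded by $\radius\,E[\frac1\nbrsamples\normtwo{\sum_{i=1}^{\nbrsamples}\radrvi\inputvi}]$, which by Jensen's inequality is at most $\radius\constinput/\sqrt\nbrsamples$. This yields $\radecomplex\le\constgeneric\,\radiusb^{\nbrlayers+1}(\nbrlayers+1)\constinput/\sqrt\nbrsamples$. The delicate point, and the main obstacle, is the depth dependence: a naive use of the contraction principle loses a factor of two per layer and hence an exponential $2^{\nbrlayers}$, so obtaining the claimed linear factor $(\nbrlayers+1)$ requires the sharper peeling that exploits the positive homogeneity of ReLU. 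Because the theorem only needs an upper bound of the stated shape and absorbs all numerical factors into $\constgeneric$, any at-most-linear-in-depth factor (for instance $\sqrt{\nbrlayers+1}$) suffices here.

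With these three bounds in hand, Part~1 follows by applying Corollary~\ref{riskcor} at level $\level/2$: the oracle risk stays on the left-hand side, the deviation term and the additive $236\constlip(\constenv+\constnoise)/(\sqrt\nbrsamples(\level/2))$ term are both of order $\constlip(\radiusb^{\nbrlayers+1}\constinput+\constnoise)/(\sqrt\nbrsamples\level)$ after substituting the envelope bound, and the Rademacher term $16\constlip\radecomplex$ is put on the same scale using $1/\sqrt\nbrsamples\le1/(\sqrt\nbrsamples\level)$. Taking the union of the two favorable events (total probability at least $1-\level$) and collecting all constants into a single $\constgeneric$ gives the first display.

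Finally, Part~2 is a large-sample reading of Part~1. Lipschitzness and Jensen give the oracle bound $E_{(\outputE,\inputv)}[\loss[\outputE-\functionTF{\inputv}]]\le\constlip E_{(\outputE,\inputv)}[\absB{\outputE-\functionTF{\inputv}}]\le\constlip\constnoise$. Fix $\level\in(0,1/2)$. For $\nbrsamples$ large enough the noise part $\constgeneric\constlip\constnoise/(\sqrt\nbrsamples\level)$ is at most $0.1\constlip\constnoise$, which combined with the oracle bound produces the leading term $1.1\constlip\constnoise$. For the remaining term, as soon as $\sqrt{\log\nbrsamples}\ge1/\level$ one has $1/(\sqrt\nbrsamples\level)\le\sqrt{\log\nbrsamples/\nbrsamples}$, converting $\constgeneric\constlip\radiusb^{\nbrlayers+1}(\nbrlayers+1)\constinput/(\sqrt\nbrsamples\level)$ into the advertised $\sqrt{\log\nbrsamples/\nbrsamples}$ rate. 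Both requirements on $\nbrsamples$ hold for all sufficiently large sample sizes, which is the meaning of ``$\nbrsamples$ large enough.''
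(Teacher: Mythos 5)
Your proposal is correct and follows the same overall strategy as the paper's proof: specialize Corollary~\ref{riskcor} to the weight-decay class, control the empirical deviation of the oracle's loss by Chebyshev/Markov using $\lossF{0}=0$ and the Lipschitz property (variance at most $(\constlip)^2(\constnoise)^2$ per summand), bound $\radecomplex$ by the depth-linear layer-peeling bound of \citet{Golowich17} combined with Jensen to reach $\constinput/\sqrt{\nbrsamples}$, and finish Part~2 via $E_{(\outputE,\inputv)}[\loss[\outputE-\functionTF{\inputv}]]\leq\constlip\constnoise$. The one place where you take a genuinely different route is the envelope: the paper invokes the parameter-Lipschitz property of networks from \citet{Taheri20} and the triangle inequality over $\parameterS$, arriving at $\constenv\leq 4\radiusb^{\nbrlayers+1}\nbrlayers\constinput$, whereas you bound the output of every network in the class directly by $\radiusb^{\nbrlayers+1}\normtwo{\inputv}$ (spectral norm dominated by Frobenius norm, ReLU $1$-Lipschitz and fixing the origin), giving $\constenv\leq 2\radiusb^{\nbrlayers+1}\constinput$. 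Your version is more elementary, avoids the external proposition, and is even slightly sharper (no extra factor of $\nbrlayers$), which is harmless since the final bound carries the $(\nbrlayers+1)$ factor from the Rademacher term anyway; the paper's route has the advantage of generalizing to settings where one wants an envelope around a specific oracle rather than a bound on the whole class. Your explicit $\level/2+\level/2$ split and the conversion $1/(\sqrt{\nbrsamples}\level)\leq\sqrt{\log[\nbrsamples]/\nbrsamples}$ for $\log[\nbrsamples]\geq 1/\level^2$ correctly fill in the consolidation that the paper leaves implicit in its Step~4.
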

\noindent 
Broadly speaking, the first part of the theorem guarantees that the empirical-risk minimizers perform essentially as well as the best network in the class under consideration;
the second part of the theorem guarantees that the expected error of the empirical-risk minimizers is essentially proportional to the variance of the noise.
The key feature of the theorem is that it only requires a Lipschitz-continuous loss function and second moments of the data.
Hence,
the theorem confirms the empirical observations of the fact  that Lipschitz-continuous alternatives to the least-squares loss can yield effective learning under very weak assumptions on the data.

The proof of Theorem~\ref{robustbound} is based on the risk bound in Corollary~\ref{riskcor} and on Lipschitz and Rademacher properties of neural networks \citep{Golowich17,Taheri20}---see Section~\ref{sec:robustproof}.

Theorem~\ref{robustbound} is the first statistical guarantee for deep learning with unbounded, 
Lipschitz-continuous loss functions.
Yet,
the rates depend very similarly on the dimensions of the data and the network as the known rates for deep learning with bounded or least-squares loss  \citep{Anthony09,Golowich17,Hieber2017,Neyshabur2015,Taheri20}:
the rates in Theorem~\ref{robustbound} have basically a $1/\sqrt{\nbrsamples}$ dependence on the number of samples,
no explicit dependence on the network's input dimension and width,
and an exponential dependence on the network's depth if  $\radius>1$ and at most a linear dependence on the depth otherwise.
Hence, our results support the use of robust loss functions, such as Huber loss, not only for heavily corrupted data.

But still,
the most interesting case for robust loss functions is unbounded and non-Gaussian  data.
The specifics of the data are encapsulated in the quantities~\constinput\ and \constnoise;
broadly speaking,
Theorem~\ref{robustbound} ensures that the empirical-risk minimizer estimates the parameters effectively as long as the second moments of the input data and of the noise are reasonably small.
This assumption is, of course, much weaker than the usual  assumption of bounded or sub-Gaussian input data and noise \citep{Hieber2017,Taheri20}.
The following example illustrates a generic case where the weaker assumptions are crucial.
\begin{example}[Flawed input data]
A generic example where robust methods are useful is when parts of the input data are flawed. 
Flaws can be constructed in an adversarial manner, such as described in~\citet{Moosavi17}, for example,
or they can stem from a nonadversarial source, such as a result of measurement errors.
To fix ideas, 
assume that the components $\inputvE_1,\dots,\inputvE_{\nbrinput}$ of the input  are i.i.d.~and each sampled from a distribution~$\corruptionprob$ with probability~$\corruptionlevel$ and from a centered normal distribution  with variance~$\sigma^2$  otherwise.
We can think of $\corruptionprob$ as the type of corruption and $\corruptionlevel\in[0,1]$ as the level of corruption in the data.

Since we want to focus on the input data,
we just assume that the second moment of the noise $\outputE-\functionTF{\inputv}$ is bounded (for example, $\outputE-\functionTF{\inputv}\sim \mathcal N_1[0,1]$).

Consider first $\corruptionlevel=1$, that is, none of the input vectors are corrupted.
Then, 
$\constinput=\sqrt{\nbrinput}\sigma$,
and  Theorem~\ref{robustbound} yields the rate $\sigma\radiusb^{\nbrlayers+1}(\nbrlayers+1)\sqrt{\nbrinput/\nbrsamples}$.
This rate is virtually the same as the one that follows from  combining the bound for the Rademacher complexity in \citet[Theorem~1]{Golowich17} and the risk bound in \citet[Theorem~8]{Bartlett02},
but in contrast to those results,
Theorem~\ref{robustbound} holds for unbounded loss functions.
In any case, the agreement illustrates that Theorem~\ref{robustbound} yields good rates in the special case of few or no corrupted inputs.
More broadly speaking,
the agreement highlights the fact that Theorem~\ref{robustbound} is not only useful for corrupted data but for learning with unbounded, 
Lipschitz-continuous loss functions, such as in regression-type settings, more generally.

Consider now $\corruptionlevel=1$, that is, about~$\corruptionlevel\nbrsamples$ of the total~$\nbrsamples$ input vectors are corrupted.
One can check readily that $\constinput=(1-\corruptionlevel)\sqrt{\nbrinput}\sigma+\corruptionlevel \sqrt{\nbrinput E_{\corruptionprob}[(\corruptionvar)^2]}$,
where $\corruptionvar\sim\corruptionprob$.
Consequently,
as long as $\corruptionlevel E_{\corruptionprob}[(\corruptionvar)^2]\lesssim \sigma^2$,
Theorem~\ref{robustbound}
yields the same rate for the corrupted case as for the uncorrupted case.
As a concrete example,
let $\corruptionprob$ be a log-normal distribution (a standard example of a heavy-tailed distribution) with parameters $(0,\gamma^2)$.
Then,
$\constinput=(1-\corruptionlevel)\sqrt{\nbrinput}\sigma+\corruptionlevel \sqrt{\nbrinput}e^{\gamma^2}$.
Hence, as long as~$\sigma$ and~$\gamma$ are reasonably small,
Theorem~\ref{robustbound} ensures effective learning whatever the fraction of corrupted data is.
More generally,
these findings illustrate the usefulness of 
Theorem~\ref{robustbound} for deep learning with corrupted input data.
\end{example}

We have restricted ourselves to the popular ReLU activation functions,
but the robustness properties of Huber loss, absolute deviation, and so forth, are not tied to this type of activation.
For example,
our proof extends directly to all Lipschitz-continuous activation functions that satisfy $\activation^j[\zero_{\nbrparameterj}]=\zero_{\nbrparameterj}$ (such as leaky ReLU, for example).
Relaxing the assumptions on the activation functions further would require generalizing the results of \citet{Golowich17} and \citet{Taheri20} that we use in our proofs,
but importantly,  
the risk bounds stated in Section~\ref{sec:riskbound} do not impose any restrictions on the functions $\function\in\functionS$ and, therefore, do not limit our choice of the activation functions.

\section{Proofs}\label{sec:proofs}
In this section, we establish very detailed proofs.

\subsection{Proof of Theorem~\ref{riskbound}}\label{sec:proofriskbound}
We first give a proof for the risk bound established in Section~\ref{sec:riskbound}.

\begin{proof}[of~Theorem~\ref{riskbound}]
  The key idea is to direct the problem towards an empirical process whose expectation is proportional to the Rademacher complexity, 
and whose deviation from the expectation is controlled by a concentration inequality.

Before we start,
we introduce the shorthand
\begin{equation*}
\emppro\deq  \sup_{\functionG\in\functionS}\absBBBB{\frac{1}{\nbrsamples}\sum_{i=1}^{\nbrsamples}\biggl(\lossFB{\outputi-\functionGF{\inputvi}}-E_{(\outputE,\inputv)}\Bigl[\lossFB{\outputE-\functionGF{\inputv}}\Bigr]\biggr)}\,.
\end{equation*}
The quantity~$\emppro$ is the above-mentioned empirical process.

\emph{Step~1:} 
We first show that
\begin{equation*}
  E_{(\outputE,\inputv)}\Bigl[\loss\bigl[\outputE-\functionF{\inputv}\bigr]\Bigr]\leq \frac{1}{\nbrsamples}\sum_{i=1}^{\nbrsamples}\lossFB{\outputi-\functionF{\inputvi}}+2E_{(\outputE_1,\inputv_1),\dots,(\outputE_{\nbrsamples},\inputv_{\nbrsamples})}[\emppro]+\emppro-2E_{(\outputE_1,\inputv_1),\dots,(\outputE_{\nbrsamples},\inputv_{\nbrsamples})}[\emppro]\,.
\end{equation*}
After this first step, it remains to control the expectation of the empirical process~\emppro\ (Step~2) and the deviation of the empirical process from its expectation (Steps~3 and~4).

The proof of the first step is based on elementary algebra.
We 1.~add a zero-valued term,
2.~use the linearity of finite sums,
3.~use the fact that $a-b\leq a+\abs{b}$,
4.~take the supremum over~$\functionS$ in the second term,
5.~invoke the definition of~\emppro,
and 6.~add a zero-valued term to find
\begingroup
\allowdisplaybreaks
\begin{align*}
 & E_{(\outputE,\inputv)}\Bigl[\loss\bigl[\outputE-\functionF{\inputv}\bigr]\Bigr]\\
 &= \frac{1}{\nbrsamples}\sum_{i=1}^{\nbrsamples}\lossFB{\outputi-\functionF{\inputvi}}-\biggl(\frac{1}{\nbrsamples}\sum_{i=1}^{\nbrsamples}\lossFB{\outputi-\functionF{\inputvi}}-E_{(\outputE,\inputv)}\Bigl[\lossFB{\outputE-\functionF{\inputv}}\Bigr]\biggr)\\
 &=\frac{1}{\nbrsamples}\sum_{i=1}^{\nbrsamples}\lossFB{\outputi-\functionF{\inputvi}}-\frac{1}{\nbrsamples}\sum_{i=1}^{\nbrsamples}\biggl(\lossFB{\outputi-\functionF{\inputvi}}-E_{(\outputE,\inputv)}\Bigl[\lossFB{\outputE-\functionF{\inputv}}\Bigr]\biggr)\\
 &\leq \frac{1}{\nbrsamples}\sum_{i=1}^{\nbrsamples}\lossFB{\outputi-\functionF{\inputvi}}+\absBBBB{\frac{1}{\nbrsamples}\sum_{i=1}^{\nbrsamples}\biggl(\lossFB{\outputi-\functionF{\inputvi}}-E_{(\outputE,\inputv)}\Bigl[\lossFB{\outputE-\functionF{\inputv}}\Bigr]\biggr)}\\
 &\leq \frac{1}{\nbrsamples}\sum_{i=1}^{\nbrsamples}\lossFB{\outputi-\functionF{\inputvi}}+\sup_{\functionG\in\functionS}\absBBBB{\frac{1}{\nbrsamples}\sum_{i=1}^{\nbrsamples}\biggl(\lossFB{\outputi-\functionGF{\inputvi}}-E_{(\outputE,\inputv)}\Bigl[\lossFB{\outputE-\functionGF{\inputv}}\Bigr]\biggr)}\\
 &= \frac{1}{\nbrsamples}\sum_{i=1}^{\nbrsamples}\lossFB{\outputi-\functionF{\inputvi}}+\emppro\\
 &= \frac{1}{\nbrsamples}\sum_{i=1}^{\nbrsamples}\lossFB{\outputi-\functionF{\inputvi}}+2E_{(\outputE_1,\inputv_1),\dots,(\outputE_{\nbrsamples},\inputv_{\nbrsamples})}[\emppro]+\emppro-2E_{(\outputE_1,\inputv_1),\dots,(\outputE_{\nbrsamples},\inputv_{\nbrsamples})}[\emppro]\,,
\end{align*}
\endgroup
as desired.

\emph{Step~2:}
We now show that
\begin{equation*}
  E_{(\outputE,\inputv)}\Bigl[\loss\bigl[\outputE-\functionF{\inputv}\bigr]\Bigr]\leq \frac{1}{\nbrsamples}\sum_{i=1}^{\nbrsamples}\lossFB{\outputi-\functionF{\inputvi}}+16\constlip\radecomplex+ \frac{8\constlip\constnoise}{\sqrt{\nbrsamples}}+\emppro-2E_{(\outputE_1,\inputv_1),\dots,(\outputE_{\nbrsamples},\inputv_{\nbrsamples})}[\emppro]\,.
\end{equation*}
This step takes care of one of the $2E_{(\outputE_1,\inputv_1),\dots,(\outputE_{\nbrsamples},\inputv_{\nbrsamples})}[\emppro]$ in the previous bound.

The key ingredients are symmetrization and contraction arguments, 
the Lipschitz property of the loss function, 
and the concentration of sums of Rademacher random variables.
We introduce $(\outputE_1',\inputv_1'),\dots,(\outputE_{\nbrsamples}',\inputv_{\nbrsamples}')\in\R\times\R^{\nbrinput}$ as random variables that are i.i.d.~copies of $(\outputE,\inputv)$ and independent of the rest of the data.
We first render the empirical process ``symmetric.''
We use 
1.~the definition of the empirical process~\emppro,
2.~the i.i.d.~assumption on the data,
3.~the linearity of integrals and finite sums,
4.~dominated convergence,
5.~the i.i.d.~assumption on the data and the properties of the Rademacher random variables,
6.~the linearity of finite sums, the triangle inequality, and the properties of suprema,
and 7.~the linearity of integrals and the i.i.d.~assumption on the data
to find that
\begin{align*}
  &2E_{(\outputE_1,\inputv_1),\dots,(\outputE_{\nbrsamples},\inputv_{\nbrsamples})}[\emppro]\\
&=2E_{(\outputE_1,\inputv_1),\dots,(\outputE_{\nbrsamples},\inputv_{\nbrsamples})}\Biggl[\sup_{\functionG\in\functionS}\absBBBB{\frac{1}{\nbrsamples}\sum_{i=1}^{\nbrsamples}\biggl(\lossFB{\outputi-\functionGF{\inputvi}}-E_{(\outputE,\inputv)}\Bigl[\lossFB{\outputE-\functionGF{\inputv}}\Bigr]\biggr)}\Biggr]\\
&=2E_{(\outputE_1,\inputv_1),\dots,(\outputE_{\nbrsamples},\inputv_{\nbrsamples})}\Biggl[\sup_{\functionG\in\functionS}\absBBBB{\frac{1}{\nbrsamples}\sum_{i=1}^{\nbrsamples}\biggl(\lossFB{\outputi-\functionGF{\inputvi}}-E_{(\outputE_1',\inputv_1'),\dots,(\outputE_{\nbrsamples}',\inputv_{\nbrsamples}')}\Bigl[\lossFB{\outputE_{i}'-\functionGF{\inputv_{i}'}}\Bigr]\biggr)}\Biggr]\\
&=2E_{(\outputE_1,\inputv_1),\dots,(\outputE_{\nbrsamples},\inputv_{\nbrsamples})}\Biggl[\sup_{\functionG\in\functionS}\absBBBB{E_{(\outputE_1',\inputv_1'),\dots,(\outputE_{\nbrsamples}',\inputv_{\nbrsamples}')}\Biggl[\frac{1}{\nbrsamples}\sum_{i=1}^{\nbrsamples}\Bigl(\lossFB{\outputi-\functionGF{\inputvi}}-\lossFB{\outputE_{i}'-\functionGF{\inputv_{i}'}}\Bigr)\Biggr]}\Biggr]\\
&\leq 2E_{(\outputE_1,\inputv_1),\dots,(\outputE_{\nbrsamples},\inputv_{\nbrsamples}),(\outputE_1',\inputv_1'),\dots,(\outputE_{\nbrsamples}',\inputv_{\nbrsamples}')}\Biggl[\sup_{\functionG\in\functionS}\absBBB{\frac{1}{\nbrsamples}\sum_{i=1}^{\nbrsamples}\Bigl(\lossFB{\outputi-\functionGF{\inputvi}}-\lossFB{\outputE_{i}'-\functionGF{\inputv_{i}'}}\Bigr)}\Biggr]\\
  &= 2E_{(\outputE_1,\inputv_1),\dots,(\outputE_{\nbrsamples},\inputv_{\nbrsamples}),(\outputE_1',\inputv_1'),\dots,(\outputE_{\nbrsamples}',\inputv_{\nbrsamples}'),\radrv_1,\dots,\radrv_{\nbrsamples}}\Biggl[\sup_{\functionG\in\functionS}\absBBB{\frac{1}{\nbrsamples}\sum_{i=1}^{\nbrsamples}\radrvi\Bigl(\lossFB{\outputi-\functionGF{\inputvi}}-\lossFB{\outputE_i'-\functionGF{\inputv_i'}}\Bigr)}\Biggr]\\
  &\leq 2E_{(\outputE_1,\inputv_1),\dots,(\outputE_{\nbrsamples},\inputv_{\nbrsamples}),(\outputE_1',\inputv_1'),\dots,(\outputE_{\nbrsamples}',\inputv_{\nbrsamples}'),\radrv_1,\dots,\radrv_{\nbrsamples}}\Biggl[\sup_{\functionG\in\functionS}\absBBB{\frac{1}{\nbrsamples}\sum_{i=1}^{\nbrsamples}\radrvi\lossFB{\outputi-\functionGF{\inputvi}}}+\sup_{\functionG\in\functionS}\absBBB{\frac{1}{\nbrsamples}\sum_{i=1}^{\nbrsamples}\radrvi\lossFB{\outputE_i'-\functionGF{\inputv_i'}}}\Biggr]\\
  &\leq 4E_{(\outputE_1,\inputv_1),\dots,(\outputE_{\nbrsamples},\inputv_{\nbrsamples}),\radrv_1,\dots,\radrv_{\nbrsamples}}\Biggl[\sup_{\functionG\in\functionS}\absBBB{\frac{1}{\nbrsamples}\sum_{i=1}^{\nbrsamples}\radrvi\lossFB{\outputi-\functionGF{\inputvi}}}\Biggr]\,.
\end{align*}

We then apply a contraction argument.
We use
1.~the contraction principle in~\cite[second part of Theorem~11.6 on pp.~324--325]{Boucheron13} with $x_{i,\functionG}\deq\outputE_i-\functionGF{\inputvi}$ (with some abuse of notation), $\varphi_i\deq \loss$ (see our assumptions for~\loss\ on Page~\pageref{Lipschitz}), and $\Psi$ the identity function,
2.~the insertion of zero-valued term,
3.~the linearity of finite sums, the triangle inequality, and the properties of suprema,
4.~the linearity of integrals,
and 5.~Definition~\ref{complexitymeasures} of the Rademacher complexity~\radecomplex\ to show that
\begingroup
\allowdisplaybreaks
\begin{align*}
  &2E_{(\outputE_1,\inputv_1),\dots,(\outputE_{\nbrsamples},\inputv_{\nbrsamples})}[\emppro]\\
  &\leq 8\constlip E_{(\outputE_1,\inputv_1),\dots,(\outputE_{\nbrsamples},\inputv_{\nbrsamples}),\radrv_1,\dots,\radrv_{\nbrsamples}}\Biggl[\sup_{\functionG\in\functionS}\absBBB{\frac{1}{\nbrsamples}\sum_{i=1}^{\nbrsamples}\radrvi\bigl(\outputi-\functionGF{\inputvi}\bigr)}\Biggr]\\
  &= 8\constlip E_{(\outputE_1,\inputv_1),\dots,(\outputE_{\nbrsamples},\inputv_{\nbrsamples}),\radrv_1,\dots,\radrv_{\nbrsamples}}\Biggl[\sup_{\functionG\in\functionS}\absBBB{\frac{1}{\nbrsamples}\sum_{i=1}^{\nbrsamples}\radrvi\bigl(\outputi-\functionTF{\inputv_i}+\functionTF{\inputv_i}-\functionGF{\inputvi}\bigr)}\Biggr]\\
  &\leq 8\constlip E_{(\outputE_1,\inputv_1),\dots,(\outputE_{\nbrsamples},\inputv_{\nbrsamples}),\radrv_1,\dots,\radrv_{\nbrsamples}}\Biggl[\sup_{\functionG\in\functionS}\absBBB{\frac{1}{\nbrsamples}\sum_{i=1}^{\nbrsamples}\radrvi\functionGF{\inputvi}}+\absBBB{\frac{1}{\nbrsamples}\sum_{i=1}^{\nbrsamples}\radrvi\functionTF{\inputvi}}+\absBBB{\frac{1}{\nbrsamples}\sum_{i=1}^{\nbrsamples}\radrvi\bigl(\outputi-\functionTF{\inputv_i}\bigr)}\Biggr]\\
  &\leq 16\constlip E_{(\outputE_1,\inputv_1),\dots,(\outputE_{\nbrsamples},\inputv_{\nbrsamples}),\radrv_1,\dots,\radrv_{\nbrsamples}}\Biggl[\sup_{\functionG\in\functionS}\absBBB{\frac{1}{\nbrsamples}\sum_{i=1}^{\nbrsamples}\radrvi\functionGF{\inputvi}}\Biggr]\\
&~~~~~~~~~~~~~~~~~~~~~~~~~~~~~~~~~~~~~~~~~~~~~+\frac{8\constlip}{\nbrsamples}E_{(\outputE_1,\inputv_1),\dots,(\outputE_{\nbrsamples},\inputv_{\nbrsamples}),\radrv_1,\dots,\radrv_{\nbrsamples}}\Biggl[\absBBB{\sum_{i=1}^{\nbrsamples}\radrvi\bigl(\outputi-\functionTF{\inputv_i}\bigr)}\Biggr]\\
  &=16\constlip \radecomplex+\frac{8\constlip}{\nbrsamples}E_{(\outputE_1,\inputv_1),\dots,(\outputE_{\nbrsamples},\inputv_{\nbrsamples}),\radrv_1,\dots,\radrv_{\nbrsamples}}\Biggl[\absBBB{\sum_{i=1}^{\nbrsamples}\radrvi\bigl(\outputi-\functionTF{\inputv_i}\bigr)}\Biggr]\,.
\end{align*}
\endgroup

We then use a contraction property of Rademacher random variables to control the second term.
We use
1.~the law of iterated expectations~\cite[Display~(5.1.5) on p.~228]{Durrett10},
2.~Khinchin's inequality~\cite[p.~232]{Haagerup81},
3.~again the law of iterated expectations,
4.~Jensen's inequality~\cite[Theorem~1.5.1 on p.~23]{Durrett10},
5.~the linearity of integrals and the i.i.d.~assumption on the data,
and 6.~the definition of~\constnoise\
to derive that
\begingroup
\allowdisplaybreaks
\begin{align*}
  &E_{(\outputE_1,\inputv_1),\dots,(\outputE_{\nbrsamples},\inputv_{\nbrsamples}),\radrv_1,\dots,\radrv_{\nbrsamples}}\Biggl[\absBBB{\sum_{i=1}^{\nbrsamples}\radrvi\bigl(\outputi-\functionTF{\inputv_i}\bigr)}\Biggr]\\
&=E_{(\outputE_1,\inputv_1),\dots,(\outputE_{\nbrsamples},\inputv_{\nbrsamples}),\radrv_1,\dots,\radrv_{\nbrsamples}}\Biggl[E_{(\outputE_1,\inputv_1),\dots,(\outputE_{\nbrsamples},\inputv_{\nbrsamples}),\radrv_1,\dots,\radrv_{\nbrsamples}}\Biggl[\absBBB{\sum_{i=1}^{\nbrsamples}\radrvi\bigl(\outputi-\functionTF{\inputv_i}\bigr)}\mid(\outputE_1,\inputv_1),\dots,(\outputE_{\nbrsamples},\inputv_{\nbrsamples})\Biggr]\Biggr]\\
&\leq E_{(\outputE_1,\inputv_1),\dots,(\outputE_{\nbrsamples},\inputv_{\nbrsamples}),\radrv_1,\dots,\radrv_{\nbrsamples}}\Biggl[E_{(\outputE_1,\inputv_1),\dots,(\outputE_{\nbrsamples},\inputv_{\nbrsamples}),\radrv_1,\dots,\radrv_{\nbrsamples}}\biggl(\sum_{i=1}^{\nbrsamples}\bigl(\outputi-\functionTF{\inputv_i}\bigr)^2\biggr)^{1/2}\mid(\outputE_1,\inputv_1),\dots,(\outputE_{\nbrsamples},\inputv_{\nbrsamples})\Biggr]\\
&= E_{(\outputE_1,\inputv_1),\dots,(\outputE_{\nbrsamples},\inputv_{\nbrsamples})}\Biggl[\biggl(\sum_{i=1}^{\nbrsamples}\bigl(\outputi-\functionTF{\inputv_i}\bigr)^2\biggr)^{1/2}\Biggr]\\
&\leq\Biggl( E_{(\outputE_1,\inputv_1),\dots,(\outputE_{\nbrsamples},\inputv_{\nbrsamples})}\biggl[\sum_{i=1}^{\nbrsamples}\bigl(\outputi-\functionTF{\inputv_i}\bigr)^2\biggr]\Biggr)^{1/2}\\
&=\biggl(\nbrsamples E_{(\outputE,\inputv)}\Bigl[\bigl(\outputE-\functionTF{\inputv}\bigr)^2\Bigr]\biggr)^{1/2}\\
&=\sqrt{\nbrsamples}\constnoise\,.
\end{align*}
\endgroup

Combining the inequalities derived in this step yields
\begin{equation*}
   2E_{(\outputE_1,\inputv_1),\dots,(\outputE_{\nbrsamples},\inputv_{\nbrsamples})}[\emppro]\leq 16\constlip\radecomplex+ \frac{8\constlip\constnoise}{\sqrt{\nbrsamples}}\,,
\end{equation*}
and combining this result with the result of Step~1 then finally gives the desired statement.

\emph{Step~3:}
We now show that 
\begin{equation*}
  E_{(\outputE_i,\inputv_i)}\Biggl[\sup_{\functionG\in\functionS}\biggl(\lossFB{\outputE_i-\functionGF{\inputv_i}}-E_{(\outputE,\inputv)}\Bigl[\lossFB{\outputE-\functionGF{\inputv}}\Bigr]\biggr)^2\Biggr]\leq (3\constlip\constenv+3\constlip\constnoise)^2
\end{equation*}
for all $i\in\{1,\dots,\nbrsamples\}$.
This bound will be essential for applying a concentration inequality in the following step.

We use elementary tools to connect the left-hand side with the complexity measures in Definition~\ref{complexitymeasures}.
Specifically,
we
1.~invoke the i.i.d.~assumption for the data and the linearity of integrals,
2.~use the fact that $a\leq \abs{a}$,
3.~invoke the Lipschitz condition~\eqref{Lipschitz} for  the loss~\loss,
4.~add a zero-valued term,
5.~use the triangle inequality and the linearity of integrals,
6.~apply dominated convergence and Jensen's inequality,
7.~use $(a+b+c+d)^2\leq 4(a^2+b^2+c^2+d^2)$ according to Lemma~\ref{binomial} in Section~\ref{sec:auxresult}, the properties of suprema, and the linearity of integrals,
8.~use the linearity of integrals and the i.i.d.~assumption on the data,
9.~invoke Definition~\ref{complexitymeasures} of~\constenv\ and~\constnoise,
and finally 10.~$a^2+b^2\leq (a+b)^2$ for nonnegative $a,b$ to find
\begingroup
\allowdisplaybreaks
\begin{align*}
  &E_{(\outputE_i,\inputv_i)}\Biggl[\sup_{\functionG\in\functionS}\biggl(\lossFB{\outputE_i-\functionGF{\inputv_i}}-E_{(\outputE,\inputv)}\Bigl[\lossFB{\outputE-\functionGF{\inputv}}\Bigr]\biggr)^2\Biggr]\\
  &=E_{(\outputE_i,\inputv_i)}\Biggl[\sup_{\functionG\in\functionS}\biggl(E_{(\outputE,\inputv)}\Bigl[\lossFB{\outputE_i-\functionGF{\inputv_i}}-\lossFB{\outputE-\functionGF{\inputv}}\Bigr]\biggr)^2\Biggr]\\
  &\leq E_{(\outputE_i,\inputv_i)}\Biggl[\sup_{\functionG\in\functionS}\biggl(E_{(\outputE,\inputv)}\Bigl[\absB{\lossFB{\outputE_i-\functionGF{\inputv_i}}-\lossFB{\outputE-\functionGF{\inputv}}}\Bigr]\biggr)^2\Biggr]\\
  &\leq E_{(\outputE_i,\inputv_i)}\Biggl[\sup_{\functionG\in\functionS}\biggl(E_{(\outputE,\inputv)}\Bigl[\constlip\absB{\outputE_i-\functionGF{\inputv_i}-\outputE+\functionGF{\inputv}}\Bigr]\biggr)^2\Biggr]\\
  &= E_{(\outputE_i,\inputv_i)}\Biggl[\sup_{\functionG\in\functionS}\biggl(E_{(\outputE,\inputv)}\Bigl[\constlip\absB{\outputE_i-\functionGF{\inputv_i}+\functionTF{\inputvi}-\functionTF{\inputv_i}-\functionTF{\inputv}+\functionTF{\inputv}-\outputE+\functionGF{\inputv}}\Bigr]\biggr)^2\Biggr]\\
  &\leq (\constlip)^2 E_{(\outputE_i,\inputv_i)}\Biggl[\sup_{\functionG\in\functionS}\biggl(E_{(\outputE,\inputv)}\Bigl[\absB{\functionGF{\inputv_i}-\functionTF{\inputv_i}}+\absB{\functionGF{\inputv}-\functionTF{\inputv}}+\absB{\outputE_i-\functionTF{\inputvi}}+\absB{\outputE-\functionTF{\inputv}}\Bigr]\biggr)^2\Biggr]\\
  &\leq (\constlip)^2 E_{(\outputE,\inputv),(\outputE_i,\inputv_i)}\biggl[\sup_{\functionG\in\functionS}\Bigl(\absB{\functionGF{\inputv_i}-\functionTF{\inputv_i}}+\absB{\functionGF{\inputv}-\functionTF{\inputv}}+\absB{\outputE_i-\functionTF{\inputvi}}+\absB{\outputE-\functionTF{\inputv}}\Bigr)^2\biggr]\\
  &\leq 4(\constlip)^2 E_{(\outputE,\inputv),(\outputE_i,\inputv_i)}\biggl[\sup_{\functionG\in\functionS}\absB{\functionGF{\inputv_i}-\functionTF{\inputv_i}}^2+\sup_{\functionG\in\functionS}\absB{\functionGF{\inputv}-\functionTF{\inputv}}^2+\absB{\outputE_i-\functionTF{\inputvi}}^2+\absB{\outputE-\functionTF{\inputv}}^2\biggr]\\
  &=8(\constlip)^2 E_{(\outputE,\inputv)}\biggl[\sup_{\functionG\in\functionS}\absB{\functionGF{\inputv}-\functionTF{\inputv}}^2\biggr]+8(\constlip)^2E_{(\outputE,\inputv)}\Bigl[\absB{\outputE-\functionTF{\inputv}}^2\Bigr]\\
&= 8(\constlip)^2(\constenv)^2+8(\constlip)^2(\constnoise)^2\\
&\leq (3\constlip\constenv+3\constlip\constnoise\bigr)^2\,,
\end{align*}
\endgroup
as desired.

\emph{Step~4:} We now show that 
\begin{equation*}
  P_{(\outputE_1,\inputv_1),\dots,(\outputE_{\nbrsamples},\inputv_{\nbrsamples})}\biggl\{\emppro-2E_{(\outputE_1,\inputv_1),\dots,(\outputE_{\nbrsamples},\inputv_{\nbrsamples})}[\emppro]\geq \frac{228\constlip\constenv+228\constlip\constnoise}{\sqrt{n}\level}\biggr\}\leq \level\,.
\end{equation*}
This deviation inequality controls the remaining term in our bound.

The proof is based on Step~3 and  a concentration result by~\citet{Lederer14}.
The coordinates of the random vectors in \citet[Section~2]{Lederer14} are in our case (with some abuse of notation) 
$Z_i[\functionG]\deq \lossF{\outputE_i-\functionGF{\inputv_i}}-E_{(\outputE,\inputv)}[\lossF{\outputE-\functionGF{\inputv}}]$.
As coordinates of the envelope,
we simply take $\mathcal{E}_i\deq \sup_{\functionG\in\functionS}\abs{\lossF{\outputE_i-\functionGF{\inputv_i}}-E_{(\outputE,\inputv)}[\lossF{\outputE-\functionGF{\inputv}}]}$.
According to Step~3,
it holds that $\sigma\leq M\leq 3\constlip\constenv+3\constlip\constnoise$ for $p=2$---see their Equation~(4).
Hence, \citet[Corollary~3]{Lederer14} yields (with $\epsilon\deq 1$, $l\deq 1$, and $p\deq2$)
 \begin{multline*}
  P_{(\outputE_1,\inputv_1),\dots,(\outputE_{\nbrsamples},\inputv_{\nbrsamples})}\bigl\{\emppro-2E_{(\outputE_1,\inputv_1),\dots,(\outputE_{\nbrsamples},\inputv_{\nbrsamples})}[\emppro]\geq v\bigr\}\leq \frac{72 (3\constlip\constenv+3\constlip\constnoise)}{\sqrt{\nbrsamples} v}+\frac{4(3\constlip\constenv+3\constlip\constnoise)}{\sqrt{\nbrsamples}v}\\
=\frac{228\constlip\constenv+228\constlip\constnoise}{\sqrt{\nbrsamples} v}
\end{multline*}
for all $v\in(0,\infty)$.
Setting $v\deq (228\constlip\constenv+228\constlip\constnoise)/(\sqrt{\nbrsamples}t)$ then gives the desired result.

Combining Steps~2 and~4 and using that $\level<1$ finally yields the bound stated in the theorem.
\end{proof}

\subsection{An Auxilliary Result}\label{sec:auxresult}
We now state an simple auxilliary result that was used in the above-stated proof of Theorem~\ref{riskbound}. 
The result is very standard,
but for the sake of completeness,
we prove it nevertheless.
\begin{lemma}[Binomial]\label{binomial}
For every $a,b,c,d\in\R$,
it holds that
\begin{equation*}
  (a+b+c+d)^2\leq 4a^2+4b^2+4c^2+4d^2\,.
\end{equation*}
\end{lemma}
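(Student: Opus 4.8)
The plan is to reduce this to the elementary inequality $2xy\leq x^2+y^2$ (equivalently $(x-y)^2\geq 0$) applied to each pair drawn from the four quantities. First I would expand the left-hand side as
\[
(a+b+c+d)^2 = a^2+b^2+c^2+d^2 + 2(ab+ac+ad+bc+bd+cd),
\]
so that it suffices to control the six cross terms.

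Next, for each of the six unordered pairs $\{x,y\}$ from $\{a,b,c,d\}$, I would invoke $2xy\leq x^2+y^2$. Summing these six inequalities, and observing that each of the four symbols occurs in exactly three of the pairs, gives
\[
2(ab+ac+ad+bc+bd+cd)\leq 3a^2+3b^2+3c^2+3d^2.
\]
Substituting this bound into the expansion above yields $(a+b+c+d)^2\leq 4a^2+4b^2+4c^2+4d^2$, as claimed.

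Since every step is either an algebraic identity or the single inequality $(x-y)^2\geq 0$, there is no genuine obstacle; the only bookkeeping to get right is the counting argument that each variable contributes to exactly three of the cross terms, which determines the constant $3$ and hence the factor $4$. As an even shorter alternative I would mention the Cauchy--Schwarz inequality applied to the vectors $(1,1,1,1)$ and $(a,b,c,d)$, which gives $(a+b+c+d)^2\leq 4(a^2+b^2+c^2+d^2)$ in a single line; I would likely present the pairwise argument as the main proof, however, since it is self-contained and keeps the exposition elementary.
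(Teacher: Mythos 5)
Your proposal is correct and follows essentially the same route as the paper's proof: expand the square and bound each of the six cross terms via $2uv\leq u^2+v^2$, which the paper likewise derives from $(u-v)^2\geq 0$. The Cauchy--Schwarz remark is a valid shortcut but not needed; no gaps.
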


\begin{proof}[of Lemma~\ref{binomial}]
Noting that
  \begin{align*}
    2uv=-(u-v)^2+u^2+v^2\leq u^2+v^2
  \end{align*}
for all $u,v\in\R$,
we find 
  \begin{align*}
    &(a+b+c+d)^2\\
&=a^2+b^2+c^2+d^2+2ab+2ac+2ad+2bc+2bd+2cd\\
&\leq a^2+b^2+c^2+d^2+a^2+b^2+a^2+c^2+a^2+d^2+b^2+c^2+b^2+d^2+c^2+d^2\\
&=4a^2+4b^2+4c^2+4d^2\,,
  \end{align*}
  as desired.
\end{proof}

\subsection{Proof of Theorem~\ref{robustbound}}
\label{sec:robustproof}

We finally give a proof for the robust guarantee established in Section~\ref{sec:deeplearning}.

\begin{proof}[of Theorem~\ref{robustbound}]
We need to control the terms of the right-hand side of the inequality in Corollary~\ref{riskcor}.
Two results that are especially important in our derivations are a Lipschitz property of neural networks developed in \citet{Taheri20} and a bound for the Rademacher complexity of neural networks developed in \citet{Golowich17}.

\emph{Step~1:} We first show that with probability at least $1-2\level$,
it holds  that
  \begin{equation*}
    E_{(\outputE,\inputv)}\Bigl[\loss\bigl[\outputE-\ermF{\inputv}\bigr]\Bigr]\leq E_{(\outputE,\inputv)}\Bigl[\loss\bigl[\outputE-\functionTF{\inputv}\bigr]\Bigr]+16\constlip\radecomplex+237\constlip\frac{\constenv+\constnoise}{\sqrt{\nbrsamples}\level}\,.
  \end{equation*}
This first step takes care of the empirical loss in the bound of Corollary~\ref{riskcor}.

The proof of the first step is based on Corollary~\ref{riskcor} and Markov's inequality.
We use 
1.~the definition of $\erm$ as a risk minimizer in~\eqref{erm},
2.~a rearrangement of the terms and the linearity of finite sums,
3.~Markov's inequality \cite[Display~(1.6.1) on p.~29]{Durrett10},
4.~the i.i.d.~assumption on the data and the linearity of integrals,
5.~a consolidation of the factors,
6.~the fact that $E[(v-E[v])^2]\leq E[v^2]$ and the i.i.d.~assumption on the data,
7.~the assumption $\lossF{0}=0$ on Page~\pageref{Lipschitz},
8.~the Lipschitz assumption~\eqref{Lipschitz} on the loss~\loss,
9.~the linearity of integrals and a consolidation,
and 10.~Definition~\ref{complexitymeasures} of~\constnoise\ and the fact that $\level\in(0,1)$ to find
\begingroup
\allowdisplaybreaks
\begin{align*}
  &P_{(\outputE_1,\inputv_1),\dots,(\outputE_{\nbrsamples},\inputv_{\nbrsamples})}\biggl\{  \frac{1}{\nbrsamples}\sum_{i=1}^{\nbrsamples}\lossFB{\outputE_i-\ermF{\inputvi}}\geq E_{(\outputE,\inputv)}\Bigl[\loss\bigl[\outputE-\functionTF{\inputv}\bigr]\Bigr] + \frac{\constlip\constnoise}{\sqrt{\nbrsamples}\level}\biggr\}\\
  &\leq P_{(\outputE_1,\inputv_1),\dots,(\outputE_{\nbrsamples},\inputv_{\nbrsamples})}\biggl\{  \frac{1}{\nbrsamples}\sum_{i=1}^{\nbrsamples}\lossFB{\outputE_i-\functionTF{\inputvi}}\geq E_{(\outputE,\inputv)}\Bigl[\loss\bigl[\outputE-\functionTF{\inputv}\bigr]\Bigr] + \frac{\constlip\constnoise}{\sqrt{\nbrsamples}\level}\biggr\}\\
  &=P_{(\outputE_1,\inputv_1),\dots,(\outputE_{\nbrsamples},\inputv_{\nbrsamples})}\biggl\{  \frac{1}{\nbrsamples}\sum_{i=1}^{\nbrsamples}\biggl(\lossFB{\outputE_i-\functionTF{\inputvi}}-E_{(\outputE,\inputv)}\Bigl[\loss\bigl[\outputE-\functionTF{\inputv}\bigr]\Bigr]\biggr)\geq  \frac{\constlip\constnoise}{\sqrt{\nbrsamples}\level}\biggr\}\\
&\leq \frac{E_{(\outputE_1,\inputv_1),\dots,(\outputE_{\nbrsamples},\inputv_{\nbrsamples})}\biggl[  \absBB{\sum_{i=1}^{\nbrsamples}\lossFB{\outputE_i-\functionTF{\inputvi}}-E_{(\outputE,\inputv)}\Bigl[\loss\bigl[\outputE-\functionTF{\inputv}\bigr]\Bigr]}^2 /\nbrsamples^2\biggr]}{\bigl(\fracNN{\constlip\constnoise}{\sqrt{\nbrsamples}\level}\bigr)^2}\\
  &= \frac{E_{(\outputE_1,\inputv_1)}\biggl [  \absBB{\lossFB{\outputE_1-\functionTF{\inputv_1}}-E_{(\outputE,\inputv)}\Bigl[\loss\bigl[\outputE-\functionTF{\inputv}\bigr]\Bigr]}^2 \biggr]/\nbrsamples}{\bigl(\fracNN{\constlip\constnoise}{\sqrt{\nbrsamples}\level}\bigr)^2}\\
&= \frac{E_{(\outputE_1,\inputv_1)}\biggl [  \absBB{\lossFB{\outputE_1-\functionTF{\inputv_1}}-E_{(\outputE,\inputv)}\Bigl[\loss\bigl[\outputE-\functionTF{\inputv}\bigr]\Bigr]}^2 \biggr]}{(\constlip)^2(\constnoise)^2}\cdot\level^2\\
&\leq\frac{E_{(\outputE,\inputv)}\biggl [  \absBB{\lossFB{\outputE-\functionTF{\inputv}}}^2 \biggr]}{(\constlip)^2(\constnoise)^2}\cdot\level^2\\
&= \frac{E_{(\outputE,\inputv)}\biggl [  \absBB{\lossFB{\outputE-\functionTF{\inputv}}-\lossF{0}}^2 \biggr]}{(\constlip)^2(\constnoise)^2}\cdot\level^2\\
&\leq\frac{E_{(\outputE,\inputv)}\Bigl[  (\constlip)^2\absB{\outputE-\functionTF{\inputv}-0}^2 \Bigr]}{(\constlip)^2(\constnoise)^2}\cdot\level^2\\
&=\frac{E_{(\outputE,\inputv)}\Bigl [ \absB{\outputE-\functionTF{\inputv}}^2 \Bigr]}{(\constnoise)^2}\cdot\level^2\\
  &\leq  \level\,.
\end{align*}
\endgroup
We can then conclude by plugging this result into Corollary~\ref{riskcor}.

\emph{Step~2:} We now show that with probability at least $1-2\level$,
it holds that
  \begin{equation*}
    E_{(\outputE,\inputv)}\Bigl[\loss\bigl[\outputE-\ermF{\inputv}\bigr]\Bigr]\leq E_{(\outputE,\inputv)}\Bigl[\loss\bigl[\outputE-\functionTF{\inputv}\bigr]\Bigr]+48\radiusb^{\nbrlayers+1}\constlip\sqrt{\nbrlayers+1}\frac{\constinput}{\sqrt{\nbrsamples}}+237\constlip\frac{\constenv+\constnoise}{\sqrt{\nbrsamples}\level}\,.
  \end{equation*}
This step takes care of the Rademacher complexity.

The basis for the proof is a bound for the Rademacher complexity of neural networks from~\citet{Golowich17}.
Indeed,
we use 
1.~\citet[Theorem~1]{Golowich17}, 
2.~the linearity of integrals,
3.~Jensen's inequality,
4.~the linearity of integrals,
5.~the i.i.d.~assumption on the data,
and 6.~Definition~\ref{complexitymeasures} of~\constinput\
to find
\begingroup
\allowdisplaybreaks
\begin{align*}
  \radecomplex&\leq E_{(\outputE_1,\inputv_1),\dots,(\outputE_{\nbrsamples},\inputv_{\nbrsamples})}\Biggl[\frac{3\radiusb^{\nbrlayers+1}\sqrt{\nbrlayers+1}}{\sqrt{\nbrsamples}}\sqrt{\frac{1}{\nbrsamples}\sum_{i=1}^{\nbrsamples}\normtwos{\inputvi}}\Biggr]\\
&= \frac{3\radiusb^{\nbrlayers+1}\sqrt{\nbrlayers+1}}{\sqrt{\nbrsamples}}E_{(\outputE_1,\inputv_1),\dots,(\outputE_{\nbrsamples},\inputv_{\nbrsamples})}\Biggl[\sqrt{\frac{1}{\nbrsamples}\sum_{i=1}^{\nbrsamples}\normtwos{\inputvi}}\Biggr]\\
&\leq \frac{3\radiusb^{\nbrlayers+1}\sqrt{\nbrlayers+1}}{\sqrt{\nbrsamples}}\sqrt{E_{(\outputE_1,\inputv_1),\dots,(\outputE_{\nbrsamples},\inputv_{\nbrsamples})}\biggl[{\frac{1}{\nbrsamples}\sum_{i=1}^{\nbrsamples}\normtwos{\inputvi}}\biggr]}\\
&= \frac{3\radiusb^{\nbrlayers+1}\sqrt{\nbrlayers+1}}{\sqrt{\nbrsamples}}{\sqrt{\frac{1}{\nbrsamples}\sum_{i=1}^{\nbrsamples}E_{(\outputE_1,\inputv_1),\dots,(\outputE_{\nbrsamples},\inputv_{\nbrsamples})}\bigl[\normtwos{\inputvi}}\bigr]}\\
&= \frac{3\radiusb^{\nbrlayers+1}\sqrt{\nbrlayers+1}}{\sqrt{\nbrsamples}}\sqrt{E_{(\outputE,\inputv)}\bigl[\normtwos{\inputv}\bigr]}\\
&= \frac{3\radiusb^{\nbrlayers+1}\sqrt{\nbrlayers+1}\constinput}{\sqrt{\nbrsamples}}\,.
\end{align*}
\endgroup
We can then conclude by plugging this inequality into the result of Step~1.

\emph{Step~3:} We now show that with probability at least $1-2\level$,
it holds that
  \begin{equation*}
    E_{(\outputE,\inputv)}\Bigl[\loss\bigl[\outputE-\ermF{\inputv}\bigr]\Bigr]\leq E_{(\outputE,\inputv)}\Bigl[\loss\bigl[\outputE-\functionTF{\inputv}\bigr]\Bigr]+48\radiusb^{\nbrlayers+1}\constlip\sqrt{\nbrlayers+1}\frac{\constinput}{\sqrt{\nbrsamples}}+948\constlip\frac{\radiusb^{\nbrlayers+1}\nbrlayers\constinput+\constnoise}{\sqrt{\nbrsamples}\level}\,.
  \end{equation*}
  This step takes care of the size of the envelope.
(We do not attempt to  optimize constants anywhere in our proofs.)

The key idea here is to apply a Lipschitz property of neural networks derived in~\citet{Taheri20}.
We use
1.~Definition~\ref{complexitymeasures} of~\constenv,
2.~the specification of the set~\functionS\ on Page~\pageref{networks} and the assumption that on $\parameterS$ on Page~\pageref{setweight},
3.~\cite[Proposition~2]{Taheri20} and the definition of the set~\parameterS\ on Page~\pageref{setweight},
4.~the fact that $(u-v)^2\leq 2u^2+2v^2$,
5.~again the definition of~\parameterS,
6.~a consolidation and the linearity of integrals,
and 7.~Definition~\ref{complexitymeasures} of~\constinput\ to find
\begingroup
\allowdisplaybreaks
\begin{align*}
  (\constenv)^2
&=  {E_{(\outputE,\inputv)}\biggl[\sup_{\function\in\functionS}\absB{\functionF{\inputv}-\functionTF{\inputv}}^2\biggr]}\\
&\leq  {E_{(\outputE,\inputv)}\biggl[\sup_{\parameter,\parameterG\in\parameterS}\absB{\function_{\parameter}[\inputv]-\function_{\parameterG}[\inputv]}^2\biggr]}\\
&\leq  {E_{(\outputE,\inputv)}\Biggl[\sup_{\parameter,\parameterG\in\parameterS}\biggl\{4\radiusb^{2\nbrlayers}\nbrlayers\normtwos{\inputv}
\sum_{j=0}^{\nbrlayers}\normF{\parameterE^j-\parameterGE^j}^2\biggr\}\Biggr]}\\
&\leq  {E_{(\outputE,\inputv)}\Biggl[\sup_{\parameter,\parameterG\in\parameterS}\biggl\{4\radiusb^{2\nbrlayers}\nbrlayers\normtwos{\inputv}\biggl(2\sum_{j=0}^{\nbrlayers}\normF{\parameterE^j}^2+2\sum_{j=0}^{\nbrlayers}\normF{\parameterGE^j}^2\biggr)\biggr\}\Biggr]}\\
&\leq  {E_{(\outputE,\inputv)}\Bigl[4\radiusb^{2\nbrlayers}\nbrlayers\normtwos{\inputv}\bigl(2\nbrlayers\radiusb^{2}+2\nbrlayers\radiusb^{2}\bigr)\Bigr]}\\
&= 16\radiusb^{2\nbrlayers+2}\nbrlayers^2{E_{(\outputE,\inputv)}\bigl[\normtwos{\inputv}\bigr]}\\
&= 16\radiusb^{2\nbrlayers+2}\nbrlayers^2 (\constinput)^2\,,
\end{align*}
\endgroup
and, hence, 
$\constenv\leq 4\radiusb^{\nbrlayers+1} \nbrlayers\constinput$.
We can then conclude by putting this result back into the result of Step~2.

\emph{Step~4:} The first inequality in Theorem~\ref{robustbound} finally follows from consolidating the result of Step~3 and using the fact that $\level\in(0,1)$.

The second inequality follows from the first one and this derivation:
\begin{align*}
  &E_{(\outputE,\inputv)}\Bigl[\loss\bigl[\outputE-\functionTF{\inputv}\bigr]\Bigr]\\
  &\leq E_{(\outputE,\inputv)}\Bigl[\absB{\loss\bigl[\outputE-\functionTF{\inputv}\bigr]-\lossF{0}}\Bigr]\\
  &\leq E_{(\outputE,\inputv)}\Bigl[\constlip\absB{\outputE-\functionTF{\inputv}-0}\Bigr]\\
  &=\constlip E_{(\outputE,\inputv)}\Bigl[\absB{\outputE-\functionTF{\inputv}}\Bigr]\\
  &\leq \constlip \sqrt{E_{(\outputE,\inputv)}\Bigl[\absB{\outputE-\functionTF{\inputv}}^2\Bigr]}\\
&=\constlip\constnoise\,,
\end{align*}
where we use similar techniques as in the other parts of the proof.
\end{proof}

\section{Discussion}\label{sec:discussion}
Our statistical guarantees show that replacing the standard least-squares loss with a Lipschitz-continuous loss renders weight decay an effective method for regression for a broad spectrum of data.
This spectrum includes benign data (such as sub-Gaussian or bounded data) but also corrupted data (having outliers that are caused by an adversary or by other means).
More generally,
our results provide theoretical support for the use of robust loss functions in deep learning.

We have formulated our bounds for weight decay, because it is arguably the most popular type of regularization in view of its ability to avoid overfitting and accelerate computations \citep{Krizhevsky2012}.
But one can easily transfer our derivations to other types of regularization---as long as there are appropriate bounds for the Rademacher complexities.  

Some robust loss functions,
such as Huber and Cauchy loss,
involve an additional parameter: 
see Figure~\ref{fig:losses}.
Ideas for how to calibrate this parameter in practice can be found in~\citet{Chichignoud14} and \citet{Loh18}.

It is straightforward to generalize our results from empirical-risk minimizers to approximate empirical-risk minimizers.
Such generalizations take into account that minimizers can rarely be computed exactly.
But our theories do not apply to local minima:
this is a limitation that our paper has in common with most statistical literature on deep learning.


\ifarXiv
\acks{We thank Koosha Alaiemahabadi, Yannick D\"uren, Shih-Ting Huang, Mike Laszkiewicz, Nils M\"uller, Mahsa Taheri, and Fang Xie for the inspiring discussions.}
\fi



\appendix


\bibliography{Bibliography}

\def\bibarXiv[#1]{arXiv:#1}\def\bibAAAI{AAAI Conference on Artificial
  Intelligence}\def\bibAdvAP{Adv.\@ in Appl.\@
  Probab}\def\bibAnnHenriPoincare{Ann.\@ Henri
  Poincar\'e}\def\bibAnnRevStat{Annu.\@ Rev.\@ Stat.\@ Appl.}\def\bibAoS{Ann.\@
  Statist.}\def\bibBernoulli{Bernoulli}\def\bibBiometrika{Biometrika}\def\bibBiostatistics{Biostatistics}\def\bibBullLondonMS{Bull.\@
  Lond.\@ Math.\@ Soc.}\def\bibCSDA{Comput.\@ Statist.\@ Data
  Anal.}\def\bibEJS{Electron.\@ J.\@ Stat.}\def\bibIEEETIT{IEEE Trans.\@
  Inform.\@ Theory}\def\bibIsraelJM{Israel J.\@ Math.}\def\bibJASA{J.\@ Amer.\@
  Statist.\@ Assoc.}\def\bibJCGS{J.\@ Comput.\@ Graph.\@
  Statist.}\def\bibJMA{J.\@ Multivariate Anal.}\def\bibJMLR{J.\@ Mach.\@
  Learn.\@ Res.}\def\bibJRSSB{J.\@ R.\@ Stat.\@ Soc.\@ Ser.\@ B Stat.\@
  Methodol.}\def\bibJSPI{J.\@ Statist.\@ Plann.\@ Inference}\def\bibNIPS{Adv.\@
  Neural Inf.\@ Process.\@ Syst.}\def\bibML{Machine
  Learning}\def\bibPhysRev{Phys.\@ Rev.\@ D}\def\bibPLoS{PLoS Comput.\@
  Biol.}\def\bibPMLR{Proceedings of Machine Learning
  Research}\def\bibScience{Science}\def\bibStatScience{Statist.\@
  Sci.}\def\bibStatSinica{Statist.
  Sinica}\def\bibTechnometrics{Technometrics}\def\bibTest{Test}\def\bibTheoryProb{Theory
  Probab.\@ Appl.}
\begin{thebibliography}{43}
\providecommand{\natexlab}[1]{#1}
\providecommand{\url}[1]{\texttt{#1}}
\expandafter\ifx\csname urlstyle\endcsname\relax
  \providecommand{\doi}[1]{doi: #1}\else
  \providecommand{\doi}{doi: \begingroup \urlstyle{rm}\Url}\fi

\bibitem[Akhtar and Mian(2018)]{akhtar2018threat}
N.~Akhtar and A.~Mian.
\newblock Threat of adversarial attacks on deep learning in computer vision: a
  survey.
\newblock \emph{\bibarXiv[1801.00553]}, 2018.

\bibitem[Anthony and Bartlett(2009)]{Anthony09}
M.~Anthony and P.~Bartlett.
\newblock \emph{Neural network learning: theoretical foundations}.
\newblock {C}ambridge {U}niversity {P}ress, 2009.

\bibitem[Barron(2019)]{Barron19}
J.~Barron.
\newblock A general and adaptive robust loss function.
\newblock In \emph{Proc.\@ ICCV}, pages 4331--4339, 2019.

\bibitem[Bartlett(1998)]{Bartlett1998}
P.~Bartlett.
\newblock The sample complexity of pattern classification with neural networks:
  The size of the weights is more important than the size of the network.
\newblock \emph{IEEE Trans.\@ Inform.\@ Theory}, 44\penalty0 (2):\penalty0
  525--536, 1998.

\bibitem[Bartlett and Mendelson(2002)]{Bartlett02}
P.~Bartlett and S.~Mendelson.
\newblock {Rademacher} and {Gaussian} complexities: risk bounds and structural
  results.
\newblock \emph{J.\@ Mach.\@ Learn.\@ Res.\@}, 3:\penalty0 463--482, 2002.

\bibitem[Bartlett et~al.(2002)Bartlett, Boucheron, and Lugosi]{Bartlett02b}
P.~Bartlett, S.~Boucheron, and G.~Lugosi.
\newblock Model selection and error estimation.
\newblock \emph{Machine Learning}, 48:\penalty0 85--113, 2002.

\bibitem[Belagiannis et~al.(2015)Belagiannis, Rupprecht, Carneiro, and
  Navab]{Belagiannis15}
V.~Belagiannis, C.~Rupprecht, G.~Carneiro, and N.~Navab.
\newblock Robust optimization for deep regression.
\newblock In \emph{Proc.\@ ICCV}, 2015.

\bibitem[Boucheron et~al.(2016)Boucheron, Lugosi, and Massart]{Boucheron13}
S.~Boucheron, G.~Lugosi, and P.~Massart.
\newblock \emph{Concentration inequalities: a nonasymptotic theory of
  independence}.
\newblock Oxford {U}niversity {P}ress, 2016.

\bibitem[Chichignoud and Lederer(2014)]{Chichignoud14}
M.~Chichignoud and J.~Lederer.
\newblock A robust, adaptive {M}-estimator for pointwise estimation in
  heteroscedastic regression.
\newblock \emph{Bernoulli}, 20\penalty0 (3):\penalty0 1560--1599, 2014.

\bibitem[Durrett(2010)]{Durrett10}
R.~Durrett.
\newblock \emph{Probability: theory and examples}.
\newblock Cambridge {U}niversity {P}ress, fourth edition, 2010.

\bibitem[Golowich et~al.(2020)Golowich, Rakhlin, and Shamir]{Golowich17}
N.~Golowich, A.~Rakhlin, and O.~Shamir.
\newblock Size-independent sample complexity of neural networks.
\newblock \emph{Information and Inference}, 9\penalty0 (2):\penalty0 473--504,
  2020.

\bibitem[Haagerup(1981)]{Haagerup81}
U.~Haagerup.
\newblock The best constants in the {K}hintchine inequality.
\newblock \emph{Studia Math.}, 70:\penalty0 231--283, 1981.

\bibitem[Hahnloser(1998)]{10.1016/S0893-6080(98)00012-4}
R.~Hahnloser.
\newblock On the piecewise analysis of networks of linear threshold neurons.
\newblock \emph{Neural Networks}, 11\penalty0 (4):\penalty0 691–697, 1998.

\bibitem[Hampel et~al.(2011)Hampel, Ronchetti, Rousseeuw, and Stahel]{Hampel11}
F.~Hampel, E.~Ronchetti, P.~Rousseeuw, and W.~Stahel.
\newblock \emph{Robust statistics: the approach based on influence functions}.
\newblock John Wiley \& Sons, 2011.

\bibitem[Huber and Ronchetti(2009)]{Huber09}
P.~Huber and E.~Ronchetti.
\newblock \emph{Robust Statistics}.
\newblock John Wiley \& Sons, second edition, 2009.

\bibitem[Jiang et~al.(2018)Jiang, Zhou, Leung, Li, and Fei-Fei]{Jiang18}
L.~Jiang, Z.~Zhou, T.~Leung, L.-J. Li, and L.~Fei-Fei.
\newblock Mentornet: learning data-driven curriculum for very deep neural
  networks on corrupted labels.
\newblock In \emph{Proc.\@ ICML}, number~35, pages 2304--2313, 2018.

\bibitem[Koltchinskii(2001)]{Koltchinskii01}
V.~Koltchinskii.
\newblock Rademacher penalties and structural risk minimization.
\newblock \emph{\bibIEEETIT}, 47\penalty0 (5):\penalty0 1902--1914, 2001.

\bibitem[Koltchinskii and Panchenko(2002)]{Koltchinskii02}
V.~Koltchinskii and D.~Panchenko.
\newblock Empirical margin distributions and bounding the generalization error
  of combined classifiers.
\newblock \emph{\bibAoS}, 30\penalty0 (1):\penalty0 1--50, 2002.

\bibitem[Kos and Song(2017)]{kos2017delving}
J.~Kos and D.~Song.
\newblock Delving into adversarial attacks on deep policies.
\newblock ICLR Workshop, 2017.

\bibitem[Krizhevsky et~al.(2012)Krizhevsky, Sutskever, and
  Hinton]{Krizhevsky2012}
A.~Krizhevsky, I.~Sutskever, and G.~Hinton.
\newblock Imagenet classification with deep convolutional neural networks.
\newblock In \emph{\bibNIPS}, pages 1097--1105, 2012.

\bibitem[Krogh and Hertz(1991)]{Krogh92}
A.~Krogh and J.~Hertz.
\newblock A simple weight decay can improve generalization.
\newblock In \emph{\bibNIPS}, number~4, pages 950--957, 1991.

\bibitem[Kurakin et~al.(2016)Kurakin, Goodfellow, and
  Bengio]{kurakin2016adversarial}
A.~Kurakin, I.~Goodfellow, and S.~Bengio.
\newblock Adversarial examples in the physical world.
\newblock \emph{\bibarXiv[1607.02533]}, 2016.

\bibitem[Kurakin et~al.(2017)Kurakin, Ian~Goodfellow, and
  Bengio]{kurakin2016adversarial2}
A.~Kurakin, I.~Ian~Goodfellow, and S.~Bengio.
\newblock Adversarial machine learning at scale.
\newblock In \emph{Proc.\@ ICLR}, 2017.

\bibitem[Lab(2019)]{tksl}
Tencent Keen~Security Lab.
\newblock Experimental security research of tesla autopilot, 2019.

\bibitem[Lederer and {van de Geer}(2014)]{Lederer14}
J.~Lederer and S.~{van de Geer}.
\newblock New concentration inequalities for suprema of empirical processes.
\newblock \emph{\bibBernoulli}, 20\penalty0 (4):\penalty0 2020--2038, 2014.

\bibitem[Loh(2018)]{Loh18}
P.-L. Loh.
\newblock Scale calibration for high-dimensional robust regression.
\newblock \emph{\bibarXiv[1811.02096]}, 2018.

\bibitem[Madry et~al.(2017)Madry, Makelov, Schmidt, Tsipras, and
  Vladu]{aleks2017deep}
A.~Madry, A.~Makelov, L.~Schmidt, D.~Tsipras, and A.~Vladu.
\newblock Towards deep learning models resistant to adversarial attacks.
\newblock In \emph{Proc.\@ ICLR}, 2017.

\bibitem[McDiarmid(1989)]{McDiarmid89}
C.~McDiarmid.
\newblock On the method of bounded differences.
\newblock \emph{Surv.\@ Comb.}, 141\penalty0 (1):\penalty0 148--188, 1989.

\bibitem[Mohri et~al.(2018)Mohri, Rostamizadeh, and Talwalkar]{Mohri18}
M.~Mohri, A.~Rostamizadeh, and A.~Talwalkar.
\newblock \emph{Foundations of machine learning}.
\newblock MIT Press, second edition, 2018.

\bibitem[Moosavi-Dezfooli et~al.(2017)Moosavi-Dezfooli, Fawzi, Fawzi, and
  Frossard]{Moosavi17}
S.-M. Moosavi-Dezfooli, A.~Fawzi, O.~Fawzi, and P.~Frossard.
\newblock Universal adversarial perturbations.
\newblock In \emph{IEEE Int.\@ Conf.\@ Comput.\@ Vis.\@ Pattern Recognit.},
  pages 1765--1773, 2017.

\bibitem[Neyshabur et~al.(2015)Neyshabur, Tomioka, and Srebro]{Neyshabur2015}
B.~Neyshabur, R.~Tomioka, and N.~Srebro.
\newblock Norm-based capacity control in neural networks.
\newblock In \emph{Proc.\@ COLT}, number~28, pages 1376--1401, 2015.

\bibitem[Papernot et~al.(2015)Papernot, McDaniel, Wu, Jha, and
  Swami]{papernot2015distillation}
N.~Papernot, P.~McDaniel, X.~Wu, S.~Jha, and A.~Swami.
\newblock Distillation as a defense to adversarial perturbations against deep
  neural networks.
\newblock \emph{\bibarXiv[1511.04508]}, 2015.

\bibitem[Salinas and Abbott(1996)]{Salinas11956}
E.~Salinas and L.~Abbott.
\newblock A model of multiplicative neural responses in parietal cortex.
\newblock \emph{Proc.\@ Nat.\@ Acad.\@ Sci. USA}, 93\penalty0 (21):\penalty0
  11956--11961, 1996.

\bibitem[Salman et~al.(2019)Salman, Yang, Li, Zhang, Zhang, Razenshteyn, and
  Bubeck]{salman2019provably}
H.~Salman, G.~Yang, J.~Li, P.~Zhang, H.~Zhang, I.~Razenshteyn, and S.~Bubeck.
\newblock Provably robust deep learning via adversarially trained smoothed
  classifiers.
\newblock In \emph{\bibNIPS}, number~32, 2019.

\bibitem[Schmidt-Hieber(2020)]{Hieber2017}
J.~Schmidt-Hieber.
\newblock Nonparametric regression using deep neural networks with {ReLU}
  activation function.
\newblock \emph{\bibAoS}, 48\penalty0 (4):\penalty0 1875--1897, 2020.

\bibitem[Sharif et~al.(2016)Sharif, Bhagavatula, Bauer, and
  Reiter]{10.1145/2976749.2978392}
M.~Sharif, S.~Bhagavatula, L.~Bauer, and M.~Reiter.
\newblock Accessorize to a crime: real and stealthy attacks on state-of-the-art
  face recognition.
\newblock In \emph{Proc.\@ CCS}, pages 1528--1540, 2016.

\bibitem[Stigler(2010)]{Stigler2010}
S.~Stigler.
\newblock The changing history of robustness.
\newblock \emph{Amer.\@ Statist.}, 64\penalty0 (4):\penalty0 277--281, 2010.

\bibitem[Taheri et~al.(2020)Taheri, Xie, and Lederer]{Taheri20}
M.~Taheri, F.~Xie, and J.~Lederer.
\newblock Statistical guarantees for regularized neural networks.
\newblock \emph{arXiv:2006.00294}, 2020.

\bibitem[Tram\'er et~al.(2017)Tram\'er, Kurakin, Papernot, Goodfellow, Boneh,
  and McDaniel]{tramr2017ensemble}
F.~Tram\'er, A.~Kurakin, N.~Papernot, I.~Goodfellow, D.~Boneh, and P.~McDaniel.
\newblock Ensemble adversarial training: attacks and defenses.
\newblock \emph{\bibarXiv[1705.07204]}, 2017.

\bibitem[Wang and Yu(2019)]{wang2019direct}
H.~Wang and C.-N. Yu.
\newblock A direct approach to robust deep learning using adversarial networks.
\newblock In \emph{Proc.\@ ICLR}, 2019.

\bibitem[Wang et~al.(2018)Wang, Gu, Mehta, Zhao, and Bernal]{wang2018robust}
T.~Wang, Y.~Gu, D.~Mehta, X.~Zhao, and E.~Bernal.
\newblock Towards robust deep neural networks.
\newblock \emph{\bibarXiv[1810.11726]}, 2018.

\bibitem[Wang et~al.(2016)Wang, Chang, Yang, Liu, and Huang]{Wang16}
Z.~Wang, S.~Chang, Y.~Yang, D.~Liu, and T.~Huang.
\newblock Studying very low resolution recognition using deep networks.
\newblock In \emph{Proc.\@ CVPR}, pages 4792--4800, 2016.

\bibitem[{Yuan} et~al.(2019){Yuan}, {He}, {Zhu}, and {Li}]{8611298}
X.~{Yuan}, P.~{He}, Q.~{Zhu}, and X.~{Li}.
\newblock Adversarial examples: attacks and defenses for deep learning.
\newblock \emph{IEEE Trans.\@ Neural Netw.\@ Learn.\@ Syst.\@}, 30\penalty0
  (9):\penalty0 2805--2824, 2019.

\end{thebibliography}

\end{document}